\documentclass[12pt,reqno]{amsart}
\usepackage{latexsym, amsmath, amscd, amssymb, amsthm,longtable}
\usepackage[T2A]{fontenc}
\usepackage[utf8]{inputenc}
\usepackage[english]{babel}
\newtheorem*{theorem}{Тheorem}
\newtheorem*{corollary}{Corollary}

\usepackage{cite}
\usepackage{enumitem}
\setlist[itemize]{label=$\bullet$}
\usepackage{xcolor}
\usepackage{mathtools}

\usepackage{graphicx}
\makeatletter
\renewcommand{\paragraph}{\@startsection{paragraph}{4}{\z@}%
  {1.25ex \@plus 1ex \@minus .2ex}%
  {0.8ex}%
  {\normalfont\normalsize\bfseries}}
\makeatother
\usepackage{url}
\begin{document}

\title[Manifold Modeling]{Controlled Paraphrase Geometry in Sentence Embedding Space: Local Manifold Modeling and Latent Probing}
\author{Leonid Bedratyuk}
\address{ Khmelnytsky National University, Faculty of Information Technology, Ukraine}
\email{leonidbedratyuk@khmnu.edu.ua}
\maketitle

\begin{abstract}
The paper studies the local geometry of embedding clouds induced by 
\emph{controlled local classes of semantically close sentences}. The central 
question is how controlled paraphrase-like semantic variation is organized in 
sentence embedding space and whether the corresponding local structure can be 
explicitly modeled. The working assumption is that controlled semantic variation 
induces locally low-dimensional nonlinear structures in sentence embedding 
space, and that these structures can be approximated by low-degree fitted 
models.

We introduce a local geometric modeling scheme for embedding clouds based on 
affine, quadratic, and cubic fitted models. In addition, we use a surface-based 
latent probing procedure that constructs synthetic latent points in a reduced 
local PCA space with respect to the fitted local carrier. The proposed procedure 
is intended as an offline method for representation-space analysis, local 
manifold modeling, and geometry-aware latent probing.

The validity of generated latent points is evaluated using criteria that measure 
consistency with the fitted surface, preservation of neighborhood structure, 
agreement with the empirical distribution, stability of local second-order shape, and 
stability of the fitted-model coefficients. Experiments on controlled sets of 
semantically close sentences show that nonlinear local models describe embedding 
clouds more accurately than affine models. Surface-based generation provides 
strong fitted-geometry fidelity, including surface consistency, Hessian-based shape consistency, and coefficient consistency.

Downstream experiments further show that geometric validity of synthetic latent 
points does not automatically translate into improved classification 
performance. The results support explicit local geometric modeling of sentence 
embedding space and highlight the need to distinguish geometric validity from 
discriminative utility. As a resource contribution, we also introduce 
\textbf{CoPaGE-300K}, a controlled template-based dataset of semantically close 
sentence variants with slot-level annotations and precomputed sentence 
embeddings.
 
\vspace{0.5cm}
\noindent \textbf{Keywords:} sentence embeddings; controlled paraphrase geometry;
semantic variation; local manifold modeling; latent probing; fitted surfaces;
representation space analysis; geometric validity; discriminative utility;
Hessian-based shape consistency; surface-based generation algorithm.
\end{abstract}

\section{Introduction}

Sentence embeddings have become a central representation format in modern NLP and
computational linguistics. They allow texts with different lexical and syntactic
forms to be compared, clustered, retrieved, and used in downstream machine
learning tasks. Despite their widespread use, the local geometry of sentence
embedding spaces remains only partially understood. In particular, it is still
unclear how controlled semantic variation is organized geometrically when a
family of semantically close sentences is encoded by a modern sentence embedding
model.

This paper studies this question in a controlled setting. We consider local
classes of semantically close sentences generated from template families with
explicitly defined lexical slots. Such classes are not treated as universal
linguistic categories. They are operationally defined local semantic
neighborhoods in which the source of variation is known and controlled. This
makes it possible to isolate local semantic variability from the uncontrolled
heterogeneity of large natural corpora and to study the corresponding embedding
clouds as geometric objects.

The central working hypothesis is that a controlled local class of semantically
close sentences forms an embedding cloud that has more structure than a simple
affine cluster. We assume that such a cloud is concentrated near a
low-dimensional nonlinear manifold-like patch. If this hypothesis is correct,
then the local structure of the cloud can be approximated by fitted geometric
models of low algebraic degree, such as affine, quadratic, or cubic models. The
purpose of this modeling is to approximate the effective local geometry of a
specific semantic patch, rather than the entire ambient embedding space.

The proposed approach is therefore best understood as a method for local
geometric modeling and latent probing in sentence embedding space. Its aim is to
investigate whether controlled semantic variation induces measurable local
geometry, whether this geometry can be explicitly modeled, and whether fitted
local carriers can be used constructively to generate diagnostic latent points.
In this setting, surface-based generation is not introduced as a universal
production mechanism for improving classification, but as a way to probe the
fitted local structure and examine which geometric properties are preserved.

The method proceeds as follows. A local embedding cloud is first projected to a
reduced PCA space that preserves the dominant variability of the cloud. This
dimensionality reduction step is not intended to claim that the full embedding
space is globally linear. It serves as a local stabilization procedure: low-variance
and unstable directions are discarded, while the principal directions of semantic
variation are retained. All polynomial fitting and surface-based projection
steps are then performed in this reduced \(r\)-dimensional space, rather than in
the original \(768\)-dimensional ambient embedding space. This makes explicit
local nonlinear modeling computationally feasible and avoids the combinatorial
explosion of polynomial coefficients in high dimension.

After the reduced local space is obtained, affine, quadratic, and cubic fitted
models are constructed. These models are used to analyze the local geometry of
the embedding cloud and to construct synthetic latent points based on the fitted
local carrier. The generation procedure is surface-based: an initial point is
constructed in the reduced space and then projected onto the fitted local
surface. The resulting point is finally reconstructed in the affine PCA subspace
of the original embedding space. In this way, generated latent points are tied to
the fitted geometry of the local semantic patch rather than obtained only by
linear interpolation or random perturbation.

It is important to distinguish three levels of analysis. The first level concerns
local geometric modeling: whether the embedding cloud is described more
accurately by nonlinear models than by an affine approximation. The second level
concerns geometric validity: whether generated latent points remain consistent
with the fitted local structure, including the surface, local second-order shape, and
model coefficients. The third level concerns discriminative utility: whether such
geometrically valid points are useful for a downstream classifier. A good local
geometric fit does not imply that the corresponding synthetic points will
improve classification performance. For this reason, the paper explicitly
separates geometric validity from downstream discriminative utility.

This distinction is central to the interpretation of the results. Surface-based
generation is designed to preserve fitted local geometry. Its behavior is
therefore evaluated using geometric criteria such as surface consistency,
Hessian-based shape consistency, and coefficient consistency. Neighborhood- and
distribution-based criteria are also reported, since they measure a different
objective: fidelity to the empirical density of the observed finite cloud.
Downstream classification is then used as an additional diagnostic test of
whether fitted-geometry fidelity translates into discriminative usefulness in a
low-data setting.

A separate contribution of this study is \textbf{CoPaGE-300K}
(\emph{Controlled Paraphrase Geometry Embeddings}), a controlled template-based
dataset of semantically close sentence variants with slot-level annotations and
precomputed sentence embeddings. The dataset contains more than
\(3 \cdot 10^5\) sentence instances across three template families and several
complexity regimes. It should not be interpreted as a natural paraphrase corpus.
Its value lies in its controlled design, full slot-level structure, and
suitability as a benchmark for studying local geometry, manifold-like structure,
and latent probing in sentence embedding spaces.

The contributions of the paper are as follows:
\begin{itemize}
    \item we formulate the problem of local geometric modeling of controlled
    paraphrase-like embedding clouds;
    \item we propose a surface-based latent probing procedure for constructing
    synthetic latent points in a reduced local PCA space;
    \item we introduce geometric validity criteria, including surface
    consistency, Hessian-based shape consistency, and coefficient consistency;
    \item we empirically show that fitted-geometry fidelity and downstream
    discriminative utility are distinct properties: geometrically valid
    synthetic points do not necessarily improve classification;
    \item we introduce \textbf{CoPaGE-300K}, a controlled template-based dataset
    of semantically close sentence variants with slot-level annotations and
    precomputed sentence embeddings.
\end{itemize}

The experimental study is organized around three research questions:
\begin{itemize}
    \item \textbf{RQ1.} Do controlled local classes of semantically close sentences form embedding clouds with measurable nonlinear geometry?
    \item \textbf{RQ2.} Can fitted local surfaces be used to generate synthetic latent points that preserve the fitted geometry of the original embedding cloud?
    \item \textbf{RQ3.} How do geometrically valid synthetic latent points affect downstream classification, and does their geometric validity coincide with discriminative utility?
\end{itemize}

The experiments are conducted on controlled template-based families of
semantically close sentences. This design allows us to isolate local semantic
variability and focus on the geometry of the corresponding embedding clouds. The
study compares affine and nonlinear local approximations, baseline latent
generation methods, and the proposed surface-based approach. The results show
that nonlinear fitted models better describe the local geometry than affine
models, that surface-based generation provides strong fitted-geometry fidelity,
and that downstream classification does not automatically benefit from
geometrically valid synthetic points.

The remainder of the paper is organized as follows.
Section~\ref{sec:related_work} discusses related work and positions the proposed
approach within semantic representation analysis, sentence embedding geometry,
and manifold-aware latent probing. Section~\ref{sec:methodology} presents the
local geometric modeling and surface-based generation methodology.
Section~\ref{sec:experimental_design} describes the dataset, experimental
design, baselines, and evaluation metrics. Section~$5$
reports and analyzes the experimental results. The Discussion section interprets
the findings, limitations, and possible applications beyond classification. The
final section summarizes the main conclusions.

%%%%%%%%%%%%%%%%%%%%%%%%%%%%%%%%%%%%%%%%%%%%%%%%%%
\section{Related work}
\label{sec:related_work}
%%%%%%%%%%%%%%%%%%%%%%%%%%%%%%%%%%%%%%%%%%%%%%%%%%%%%%%%

The study presented in this paper lies at the intersection of semantic
representation analysis, controlled textual variability, the geometry of
sentence embedding spaces, and local geometric modeling of embedding clouds.
Since the proposed method uses synthetic latent points as a diagnostic
construction, it is also related to representation-level generation and
manifold-aware methods. The main focus, however, is the local geometry induced by
controlled paraphrase-like semantic variation in sentence embedding space.

\subsection{Semantic representations, sentence meaning, and controlled similarity}

Recent work in computational linguistics increasingly treats vector
representations not only as engineering features for downstream tasks, but also
as objects of semantic analysis. Apidianaki~\cite{Apidianaki2023} surveys
approaches to word meaning representation and interpretation, including probing
studies aimed at identifying lexical semantic knowledge encoded in
contextualized representations. Chersoni et al.~\cite{Chersoni2021Decoding}
similarly address the interpretability of vector semantic spaces by relating
embeddings to explicit semantic features. These works motivate the view that
embedding spaces can be analyzed as structured semantic objects rather than as
opaque intermediate representations.

At the sentence level, semantic textual similarity and sentence meaning have
become central tools for evaluating representation models. Sun et
al.~\cite{Sun2022SentenceSimilarityContexts} propose a context-based view of
sentence similarity, where the meaning of a sentence is related to the contexts
in which it can occur. Wang et al.~\cite{Wang2023CollectiveSTS} further show
that semantic textual similarity judgments may involve substantial human
disagreement and should not always be reduced to a single deterministic scalar.
These works are relevant to the present study because they emphasize that
sentence similarity and semantic proximity are structured phenomena, rather than
merely numerical distances between isolated sentence vectors.

Controlled evaluation of sentence meaning is especially important for connecting
semantic variation with representation structure. Fodor et
al.~\cite{Fodor2025Compositionality} compare vector-based and syntax-based
models of sentence meaning on a challenging sentence similarity dataset, showing
the value of systematic sentence variation for evaluating semantic
representations. Amigó et al.~\cite{Amigo2022ICDS} develop an information
theory-based perspective on compositional distributional semantics and analyze
the relation between embedding spaces and meaning spaces. Large-scale lexical
similarity resources such as Multi-SimLex~\cite{Vulic2020MultiSimLex} further
illustrate the importance of controlled semantic similarity data for evaluating
representation models.

A related line of work studies whether neural representations encode specific
linguistic and semantic phenomena. Shwartz and Dagan~\cite{ShwartzDagan2019}
evaluate textual representations on lexical composition phenomena, including
meaning shift. Miletić and Schulte im Walde~\cite{MileticSchulte2024MWE}
investigate the semantics of multiword expressions in transformer-based
language models and show that compositionality information can be extracted from
such models under suitable conditions. Garí Soler et
al.~\cite{GariSoler2024WordSplitting} analyze how word splitting affects the
semantic content of contextualized word representations. More broadly, benchmark
resources such as Holmes~\cite{Waldis2024Holmes} reflect the growing interest in
diagnostic evaluation of linguistic competence in language models across syntax,
morphology, semantics, reasoning, and discourse.

The present work is aligned with this broader research direction but focuses on
a different object: the local geometry induced by controlled semantic variation
in sentence embedding space. Instead of evaluating semantic similarity only
through pairwise scores or downstream performance, we construct controlled
slot-based semantic classes and analyze the geometry of the corresponding
embedding clouds. This allows us to study whether controlled paraphrase-like
variation gives rise to reproducible local geometric structure.

\subsection{Textual variability and controlled semantic clouds}

Textual variability is commonly studied through paraphrase, lexical
substitution, contextual substitution, and data augmentation methods. For text
data, admissible transformations must preserve meaning, which makes them more
constrained than transformations commonly used for images. Existing methods
include character-level perturbations, noise-based transformations, lexical
substitutions, permutations, insertions, deletions, back-translation, contextual
substitution, and generation by large language models
\cite{Bayer2022, Kobayashi2018, WeiZou2019}.

S.~Kobayashi~\cite{Kobayashi2018} proposed \emph{contextual augmentation}, where
words are replaced with context-compatible alternatives generated by a language
model. This work is relevant here because it illustrates controlled local
semantic variability within a textual class. The EDA method proposed by J.~Wei
and K.~Zou~\cite{WeiZou2019} is simpler and relies on elementary transformations,
yet it also shows that local textual variability can affect model behavior.

In the present paper, textual variability is used primarily as a source of
controlled semantic clouds. The objective is not to generate new surface strings
as such, but to obtain reproducible local families of semantically close
sentences and to study their geometric image in sentence embedding space. This
is the point at which the present work departs from conventional text
augmentation: the object of analysis is the local embedding cloud associated
with controlled semantic variation.

\subsection{Geometry of sentence embedding spaces}

A separate line of research analyzes the geometry of embedding spaces induced by
pretrained language models. Ethayarajh~\cite{Ethayarajh2019} showed that
contextualized representations in BERT, ELMo, and GPT-2 exhibit strong
anisotropy, indicating that these spaces have nontrivial geometric structure.
For sentence embeddings, Li et al.~\cite{Li2020SentenceEmbeddings} demonstrated
that representations obtained from pretrained language models may have an
unfavorable anisotropic geometry, and proposed BERT-flow to transform their
distribution into a smoother isotropic space. These works show that the
geometry of representation space is not a secondary artifact, since it affects
semantic similarity and downstream behavior.

Closer to our setting is the work of Chu et
al.~\cite{Chu2023RefinedSBERT}, which proposes Refined SBERT, a method for
redescribing Sentence-BERT representations in a manifold space while preserving
local neighborhood structure. Even closer to the revised focus of the present
paper is the work of Tehenan~\cite{Tehenan2025SemanticGeometry}, where the
problem is formulated directly as the semantic geometry of sentence embeddings.
Our work continues this line while differing in two respects: we study
controlled template-based semantic clouds with known sources of variability, and
we explicitly construct low-degree fitted surfaces in a reduced PCA space for
analyzing fitted-geometry fidelity and latent probing.

\subsection{Local geometric modeling of data clouds}

In the broader context of data analysis and machine learning, there is a
research direction concerned with local geometric modeling of point clouds,
manifold approximation, the construction of local coordinate systems, and the
recovery of low-dimensional structure in data. Classical approaches of this kind
include PCA, LLE, Isomap, Laplacian Eigenmaps, t-SNE, UMAP, and related
dimensionality reduction methods \cite{McInnes2018, Sainburg2021}. These methods
provide the geometric apparatus required to formulate questions about the local
structure of high-dimensional data.

For our setting, these methods support the general validity of analyzing local
low-dimensional structure in high-dimensional embedding spaces. Classical
manifold learning methods, however, usually do not provide an explicit analytic
model of a local cloud suitable for testing fitted-geometry fidelity or for
constructing latent probes constrained by a fitted carrier. They motivate the
low-dimensionality hypothesis, while leaving open the problem of explicit
analytic modeling of local semantic patches.

This motivates local approximation of embedding clouds by low-degree surfaces,
including affine, quadratic, and cubic models. In this sense, our work uses ideas
from manifold learning as geometric background for constructing an analytically
specified local carrier. Such a fitted carrier makes it possible to analyze
proximity relations among points, the shape of a local semantic patch, the stability of local second-order shape, the change in fitted-model coefficients, and the
relation between geometric validity and downstream usefulness.

\subsection{Representation-level generation and manifold-aware methods}

Although the main focus of this paper is local geometry rather than training-set
augmentation, the proposed latent probing procedure is methodologically related
to feature-space and manifold-aware generation methods. Data augmentation is
widely treated as a regularization strategy for constructing new samples that
preserve class membership while expanding the support of the training
distribution \cite{Mumuni2022, Bayer2022, Longpre2020, DeVries2017}. Recent
approaches increasingly move from raw data transformations to the construction
of new examples in feature space or latent space
\cite{Mumuni2022, DeVries2017, Wang2022}.

In this context, Mixup constructs new examples as convex combinations of pairs
of training samples \cite{Zhang2018}. Manifold Mixup extends this idea to hidden
representations and shows that representation space can serve as a natural
environment for constructing intermediate points and regularizing a model
\cite{Verma2019}. These approaches are based on interpolation between already
available points. In contrast, the present work focuses on explicitly recovering
a local fitted carrier near which the embedding cloud is located.

Other manifold-aware methods incorporate local geometry more explicitly. UMAP
Mixup regularizes the hidden space using UMAP before mixing points
\cite{ElLaham2024}. OMADA constructs on-manifold adversarial samples using a
generative model and latent perturbations \cite{OMADA2021}. Paschali et
al.~\cite{Paschali2019} explore geometric transformations along the boundaries
of class manifolds. TextManiA~\cite{Moon2023} uses text-driven manifold
augmentation for visual features and shows that semantic directions in
text-embedding space can have constructive value in another feature space.

These approaches are relevant because they share the assumption that useful
operations in representation space should respect local manifold-like structure.
The present paper takes a different step: it explicitly recovers a local
analytic carrier in the form of a fitted low-degree surface and uses generated
latent points as probes of this fitted geometry. This makes it possible to
separate fitted-geometry fidelity, empirical-density fidelity, and downstream
discriminative utility.

\subsection{Comparison with existing approaches and the gap in the literature}

The existing literature provides important components for the present study:
semantic representation analysis, sentence similarity, geometry of embedding
spaces, and manifold-aware operations in latent spaces. However, the setting
considered here remains insufficiently explored. We focus on controlled
paraphrase-like variation and treat the resulting sentence embedding clouds as
local geometric objects.

In particular, it remains open whether controlled semantic variation induces
reproducible low-dimensional nonlinear structure in sentence embedding space,
whether such structure can be explicitly approximated by low-degree fitted
carriers, and whether generated latent points can be used as probes of this
local geometry. This configuration---controlled semantic cloud, explicit local
low-order fitting, fitted-geometry fidelity, and comparison with downstream
discriminative utility---defines the main gap addressed by the present paper.

The contribution of this work lies in separating three questions that are often
treated together in representation-level generation and augmentation studies:
whether a local embedding cloud has a reproducible geometric structure; whether
one can construct latent points consistent with that structure; and whether such
points provide additional discriminative information for a specific downstream
task.

\subsection{Positioning of the proposed approach}

The proposed approach belongs to the broader area of representation-space
analysis and modeling, and more specifically to the study of controlled
paraphrase geometry in sentence embedding space. Its closest methodological
neighbors include Manifold Mixup \cite{Verma2019}, UMAP Mixup
\cite{ElLaham2024}, and OMADA \cite{OMADA2021}, since these approaches also rely
on the assumption that useful operations in representation space should respect
the local manifold-like structure of the data. The key difference is that our
approach explicitly constructs a fitted local model of the embedding cloud by
means of low-degree surfaces, rather than using manifold structure implicitly,
through interpolation, or through a generative mechanism.

Conceptually, the proposed approach is close to first-order manifold-based
methods, since it assumes that the local structure of data in representation
space is low-dimensional. The present setting, however, goes beyond local linear
or tangent approximation. First-order approaches typically rely on a locally
linear structure of a manifold-like patch, whereas in our work the local carrier
is explicitly modeled by quadratic fitted models and, in selected cases, by
cubic fitted models as well. In this sense, the proposed approach can be viewed
as higher-order local geometric modeling of sentence embedding clouds.

The role of latent generation in this work is diagnostic rather than primarily
production-oriented. Surface-based generation is used to test whether the fitted
local structure can serve as a constructive carrier for new latent points and to
examine which aspects of the local geometry are preserved. Downstream
classification is then used as an additional diagnostic test, aimed at
distinguishing fitted-geometry fidelity from discriminative utility. This
distinction is central to the paper: a latent point may be geometrically valid
with respect to a local semantic patch without necessarily improving a particular
classifier.

Thus, our work connects three research lines: controlled textual variability and
paraphrase-like transformations, semantic representation analysis, and local
geometric modeling of embedding clouds. The novelty lies in treating controlled
semantic variation as a source of measurable local geometry in sentence embedding
space, explicitly approximating this geometry by low-degree fitted carriers, and
using latent probing to distinguish local geometric validity from downstream
discriminative utility.

%%%%%%%%%%%%%%%%%%%%%%%%%%%%%%%%%%%%%
\section{Methodology}
\label{sec:methodology}
%%%%%%%%%%%%%%%%%%%%%%%%%%%%%%%%%%%%%%%%

This section formalizes the proposed framework for local geometric modeling of controlled semantic classes in sentence embedding space and describes the surface-based procedure used to generate geometrically consistent synthetic latent points.

\subsection{Problem statement}

Let
$$
\mathcal{S}=\{s_1,\dots,s_N\}
$$
be a finite set of texts, which we treat as a \emph{controlled local semantic class}. 
Such a class is not required to consist of strict paraphrases in a strong linguistic 
sense \cite{BhagatHovy2013}; it may instead be a controlled family of semantically 
close sentences with a known source of variation. This choice makes it possible to 
isolate local semantic variability and study its geometric image in embedding space.

Let
$$
E:\mathcal{S}\to\mathbb{R}^d
$$
be an embedding map that assigns to each text a vector representation in feature 
space. Then the set
$$
\mathcal{X}=E(\mathcal{S})=\{x_1,\dots,x_N\}\subset\mathbb{R}^d
$$
forms a local point cloud representing the given class in the chosen embedding space.

The goal of this work is to construct new points
$$
x^\ast\in\mathbb{R}^d,\qquad x^\ast\notin\mathcal{X},
$$
which can be interpreted as \emph{synthetic latent examples} of the same local 
semantic class. In contrast to methods that generate new surface strings, the construction in
this work takes place directly in embedding space. The task is therefore to construct geometrically consistent 
synthetic points in feature space, instead of generating new texts as such.

It is essential to distinguish two related but non-equivalent tasks. The first is 
to describe the local embedding cloud adequately by means of an explicit geometric 
model. The second is to use this model to construct new latent points that preserve 
the local structure of the class and may be useful for latent probing or downstream analysis. 
A good approximation of the original cloud is a necessary condition, although by 
itself it does not guarantee the quality of the generated points.

\subsection{Local manifold hypothesis}

The working assumption is the \emph{local manifold hypothesis}: the point cloud 
$\mathcal{X}$ is not an arbitrary scattered subset of $\mathbb{R}^d$, but has a 
locally low-dimensional geometric structure. We do not assume the existence of a 
predefined ``true manifold''; instead, we work with an empirical 
\emph{manifold-like carrier} that can be approximated by explicit low-order models.

The term \emph{manifold-like} is used here in an operational sense. We do not
assume the existence of a true underlying semantic manifold. The working
hypothesis is that controlled slot-based variation restricts the active semantic
degrees of freedom, and therefore its image in sentence embedding space may have
low effective dimensionality and a regular local shape. The fitted surface should
therefore be understood as an explicit local carrier used to approximate and
probe the observed embedding cloud, rather than as the true semantic manifold.

The basic nonlinear model in this work is a quadric surface, since it is the 
simplest form capable of capturing local second-order shape. A cubic model is considered 
as an additional extension for cases in which the local embedding cloud exhibits 
more complex geometry and quadratic approximation becomes insufficient.

The aim of the study is to test whether the local geometry of controlled semantic 
classes has a regular structure suitable for explicit low-parametric modeling.

This setting is naturally related to the ideas of \textit{Mixup} and 
\textit{Manifold Mixup}. In classical Mixup, new examples are generated along the 
chord between two points. In \textit{Manifold Mixup}, the same idea is transferred 
to the space of hidden representations. In both cases, interpolation remains 
linear. Our setting is based on the observation that the local carrier of a 
semantic class may be curved, so a chord need not be consistent with its local 
geometry. For this reason, we consider the construction of new points followed by 
their alignment with a fitted local surface, rather than pure linear 
interpolation.

\subsection{Local dimensionality reduction}

Since direct surface approximation in the full $d$-dimensional space is usually 
unstable, local dimensionality reduction is performed first. Let
$$
P:\mathbb{R}^d\to\mathbb{R}^r,\qquad r\ll d,
$$
be a projection map constructed by PCA on the set $\mathcal{X}$. Denote
$$
z_i=P(x_i),\qquad i=1,\dots,N.
$$
This gives the cloud
$$
Z=\{z_1,\dots,z_N\}\subset\mathbb{R}^r,
$$
on which the local geometric model is constructed.

Here PCA serves as a stabilizing and coordinate-defining step, allowing the 
analysis to move to a space in which the principal directions of local variation 
are expressed most strongly. All subsequent fitted surface modeling is performed 
in $\mathbb{R}^r$. Geometric statements formulated in the reduced space should therefore be
understood as statements about the effective local coordinate representation of
the cloud, rather than about the full ambient embedding space.

In the experiments, the reduced dimension \(r\) was selected separately for each
local class using the \(90\%\) explained-variance criterion. Thus, \(r\) is not a
global hyperparameter fixed for all regimes, but an adaptive local quantity
determined by the effective dimensionality of the corresponding embedding cloud.
Across the fifteen regimes used in the study, the selected values ranged from
\(r=10\) to \(r=37\). Small one-slot regimes required between \(10\) and \(12\)
components, whereas the full four-slot regimes required between \(36\) and \(37\)
components. This supports the interpretation of PCA as a local stabilization
procedure rather than as a global linear model of the full embedding space.

\subsection{Quadratic model of the local patch}

In $\mathbb{R}^r$, a general quadric is given by
$$
f(z)=z^\top Qz+L^\top z+C=0,
$$
where
$$
Q\in\mathbb{R}^{r\times r},\qquad Q^\top=Q,\qquad
L\in\mathbb{R}^r,\qquad C\in\mathbb{R}.
$$
The number of independent parameters of this model is
$$
p(r)=\frac{r(r+1)}{2}+r+1.
$$

For each point $z_i\in Z$, we obtain one linear equation in the coefficients of 
the quadric. After expanding the quadratic and linear terms, this yields a 
homogeneous system
$$
A\theta\approx 0,
$$
where $\theta$ is the vector of unknown model parameters. The fitted quadric is 
defined as the singular vector of the matrix $A$ corresponding to the smallest 
singular value, under the normalization condition $\|\theta\|=1$. This gives a 
stable algebraic scheme for parameter estimation.

It should be emphasized that this formulation minimizes the \emph{algebraic 
residual}, rather than the orthogonal geometric distance to the surface in the 
strict sense. This makes the procedure computationally convenient and stable, 
while the fitted residual should not be identified with the exact geometric 
distance.

If the quadratic model does not describe the local cloud with sufficient accuracy, 
the same scheme can be extended to surfaces of third order. A cubic model has more 
coefficients and is therefore less stable in the small-data regime or when the 
local sample is not sufficiently dense. It increases the flexibility of the 
approximation, but at the same time raises the risk of overfitting to the 
empirical configuration of points.

In this work, no separate regularization of the cubic fitted model is introduced. 
This risk is mitigated by using controlled local classes of sufficiently large 
size, adaptive selection of the reduced-space dimensionality, and subsequent 
evaluation of generated points by geometric validity criteria, in particular 
Hessian-based shape consistency and coefficient consistency. Therefore, the cubic model is 
used only when its appropriateness is supported empirically.

\subsection{Generation of new latent points}

After the fitted surface has been constructed, new synthetic points are generated
in two stages.

\subsubsection{Internal barycentric initialization}

First, an internal point is constructed as a convex combination of the existing
points in the cloud:
$$
v=\sum_{i=1}^{N}\lambda_i z_i,
\qquad
\lambda_i\ge 0,
\qquad
\sum_{i=1}^{N}\lambda_i=1.
$$
Thus, \(v\) belongs to the convex hull of the local cloud. This guarantees that
the initial point is internal in the affine-convex sense, but it does not by
itself guarantee stable angular behavior with respect to the origin. This
distinction is important because the subsequent projection step is performed in
a reduced representation space, where directions and local geometry may be
sensitive to points whose convex hull passes close to the origin.

To make this issue explicit, let
$$
D_Z=\operatorname{diam}(Z)
=
\max_{i,j}\|z_i-z_j\|
$$
be the Euclidean diameter of the local cloud
\(Z=\{z_1,\dots,z_N\}\subset\mathbb{R}^r\). More generally, for a reference point
\(a\in\mathbb{R}^r\), define
$$
d_Z(a)=\operatorname{dist}(a,\operatorname{conv}(Z)).
$$
If \(d_Z(a)>0\), then the convex hull of the cloud is separated from the
reference point \(a\). The following elementary stability statement shows that
barycentric points are angularly controlled with respect to \(a\), provided that
the cloud diameter is small relative to this separation.

\begin{theorem}
Let \(Z=\{z_1,\dots,z_N\}\subset\mathbb{R}^r\) be a finite cloud and let
\(a\in\mathbb{R}^r\) be a reference point such that
$$
d_Z(a)=\operatorname{dist}(a,\operatorname{conv}(Z))>0.
$$
Let \(v\in\operatorname{conv}(Z)\) be a barycentric point with \(v\ne a\). If
$$
D_Z\leq 2d_Z(a),
$$
then, for every \(j=1,\dots,N\),
$$
\angle(v-a,z_j-a)
\leq
2\arcsin\frac{D_Z}{2d_Z(a)}.
$$
\end{theorem}

The proof is given in Appendix~\ref{app:barycentric_stability}.The theorem shows that barycentric initialization is angularly controlled when
the local cloud is well separated from the chosen reference point \(a\) relative
to its diameter. In this case, a barycentric point cannot move into an unstable region near the
chosen reference point and remains within a controlled angular neighborhood of
the original cloud. This provides a
geometric reason for applying barycentric initialization locally, and, when the
cloud has a more complex structure, within a small local subcluster rather than
over the entire set of points.

In the practical implementation, the coefficients \(\lambda_i\) are sampled from
a Dirichlet distribution on the standard simplex:
$$
(\lambda_1,\dots,\lambda_N)\sim \mathrm{Dir}(\alpha_1,\dots,\alpha_N),
\qquad
\lambda_i\ge 0,\qquad
\sum_{i=1}^{N}\lambda_i=1.
$$
In the simplest case, a symmetric Dirichlet distribution with identical
parameters \(\alpha_i=\alpha\) is used. This provides a random but controlled
sampling of internal barycentric combinations. In the baseline scenario, we use
the symmetric Dirichlet distribution with \(\alpha_i=1\). Since distribution
consistency is later evaluated with respect to the empirical local distribution,
the distribution of barycentric weights is fixed explicitly. The resulting
barycentric point serves as the initial approximation to a new latent example,
which is then aligned with the fitted local carrier.

In the standard centered PCA coordinates used in our implementation, the origin
coincides with the empirical mean of the reduced cloud. Since this mean is an
equal-weight convex combination of the points, the origin belongs to
\(\operatorname{conv}(Z)\), and therefore the above separation condition does
not hold for \(a=0\). Thus, the theorem is not used as an automatic guarantee
for centered PCA coordinates. Its role is to clarify an additional sufficient
condition under which barycentric initialization is angularly stable relative to
an external reference point. In the implemented algorithm, the unconditional
guarantee is convex-hull internality; angular and distributional admissibility
are checked empirically by the geometric validity criteria.

\subsubsection{Alignment with the fitted surface}

After the initial point $v$ has been constructed, it is locally aligned with the
fitted surface. Let the fitted surface be given by the implicit equation
$$
f(z)=0,
\qquad z\in\mathbb{R}^r.
$$
The starting point of the iterative process is
$$
z_0=v.
$$
To project the point onto the fitted surface, we use a fast stabilized
Newton-type scheme:
$$
z_{k+1}
=
z_k
-
\frac{f(z_k)}
{\|\nabla f(z_k)\|^2+\varepsilon}
\,\nabla f(z_k),
\qquad \varepsilon>0.
$$
After several iterations, we obtain a point
$$
z^\ast\in\mathbb{R}^r,
$$
which either belongs to the fitted surface or is sufficiently close to it in
terms of the normalized residual
$$
\frac{|f(z^\ast)|}{\|\nabla f(z^\ast)\|+\varepsilon}.
$$

In the practical implementation, the length of a single iterative step is also
bounded. If the update vector exceeds a prescribed local threshold, its norm is
rescaled to that threshold. This prevents excessively large jumps in the reduced
PCA space and preserves the local character of the projection.

The iterative process stops when at least one of the following conditions is
satisfied:
$$
|f(z_k)|<10^{-6},
\qquad
\|\nabla f(z_k)\|<10^{-8},
\qquad
k\ge 50.
$$
The first condition corresponds to a sufficiently small residual with respect to
the fitted surface, the second prevents unstable updates in regions with an
almost vanishing gradient, and the third sets the maximum number of iterations.

In the reported experiments, the projection procedure was stable for the
evaluated full-regime classes. For \(A\text{-}C5\), \(B\text{-}C5\), and
\(C\text{-}C5\), the mean normalized projection residuals were respectively
\(9.22\cdot 10^{-8}\), \(5.72\cdot 10^{-8}\), and \(8.98\cdot 10^{-8}\), with a
convergence fraction equal to \(1.0\) in all three cases. Thus, for the main
surface-based generation experiment, the Newton-type projection did not exhibit
observable convergence failures.

The point is then mapped back from the reduced PCA space to the original
embedding space by reconstruction in the affine subspace spanned by the first
$r$ principal components:
$$
x^\ast \approx \mu + U_r z^\ast,
$$
where $\mu$ is the center of the local cloud and $U_r$ is the matrix of the first
$r$ principal directions. The resulting vectors
$$
x^\ast\in\mathbb{R}^d
$$
are treated as synthetic embeddings associated with the original local semantic
class.

This alignment procedure generates a new point in a neighborhood of the existing
embedding cloud and explicitly moves it onto the fitted local carrier. This
projection is what distinguishes surface-based generation from simple
barycentric generation or linear interpolation in the reduced space.

\subsection{Several algorithmic variants of generation}

The proposed scheme admits several implementation variants.

The first variant uses a global choice of barycentric weights over the entire
local cloud. This is the simplest implementation and works well for compact and
relatively homogeneous clouds.

The second variant first performs local grouping of points and then constructs
barycentric combinations only within a small subcluster. This scheme is more
stable when the initial set contains several distinct local patches.

The third variant replaces the quadratic fitted surface with a cubic fitted
surface when experiments indicate that a second-order surface does not reproduce
the local structure with sufficient accuracy.

All these variants belong to the same methodology: a synthetic point is first
constructed as an internal combination of the available data and is then aligned
with the local geometric model of the manifold-like patch.
\subsection{Geometric validity criteria}

To evaluate the proposed generation procedure, we use geometric criteria that
test whether a generated latent point remains consistent with the local structure
estimated from the original embedding cloud.

Since a synthetic point does not necessarily correspond to an explicitly existing
natural-language text, its validity in this work is defined primarily in
geometric terms. We distinguish three primary aspects of such validity: consistency with the
fitted surface, consistency with the local neighborhood structure, and
consistency with the empirical local distribution. In addition, when synthetic
points are added to the original cloud and the local model is refitted, we also
measure the stability of the Hessian-based second-order descriptors and the
stability of the fitted-model coefficients.

\subsubsection{Consistency with the fitted surface}

A new point should remain close to the same local surface that approximates the
original cloud. This is evaluated by the algebraic residual
$$
|f(z^\ast)|
$$
or by a normalized surface residual computed with respect to the fitted implicit model. Smaller values of this
criterion indicate that the generated point preserves the selected geometric
model of the local class.

\subsubsection{Local neighborhood consistency}

A new point should be naturally embedded into the local structure of the cloud.
For this purpose, we analyze its distances to the nearest neighbors and the
stability of the local $k$-neighborhood after adding the synthetic point. In the
practical implementation, this criterion is evaluated by a normalized overlap
measure between sets of $k$-nearest neighbors, as well as by average local
distances in the reduced space. The parameter $k$ is chosen as a small number
corresponding to the local scale of the cloud; in our experiments, it is fixed in
advance and used uniformly for all classes to ensure comparability of the
results. The resulting metric is therefore local and dimensionless, and larger
consistency values indicate better preservation of the neighborhood structure.

\subsubsection{Consistency with the local distribution}

A synthetic point should not be a geometric outlier with respect to the local
semantic class. As a measure of deviation from the empirical local distribution,
we use the Mahalanobis distance and its local analogue in $\mathbb{R}^r$. This
criterion makes it possible to filter out points that formally lie near the
fitted surface but behave statistically as outliers.

Together, these criteria define the geometric admissibility of generated latent
points: a point is considered valid if it satisfies the fitted-surface
constraint, does not distort the local neighborhood structure, and remains within
the empirical range of the local manifold-like patch. The refit-based criteria
then assess whether adding such points changes the fitted local carrier itself.

\subsection{Methodological meaning of the approach}

The proposed scheme treats embedding space not only as an environment for
measuring semantic similarity or training downstream models, but also as an
object of local geometric analysis. The central question of this work is whether
an embedding cloud induced by controlled semantic variability has a reproducible
local structure, and whether this structure can be explicitly described by a
low-degree fitted model. Accordingly, the main emphasis shifts from constructing
additional training examples to recovering a local geometric carrier that
approximates the corresponding semantic patch.

In this sense, the proposed approach differs both from text-based methods that
operate in the space of symbolic sequences and from interpolation-based methods
that construct new representations along straight segments between existing
points. In our setting, the local embedding cloud is analyzed first; then a fitted
surface is constructed in the reduced PCA space; only after that are synthetic
latent points considered, with alignment to this local geometric structure.

Thus, the proposed approach should primarily be understood as a method for local
geometric modeling of semantic variation in sentence embedding space.
Geometry-aware latent generation in this scheme is neither an end in itself nor a
guarantee of improved downstream classification. It is a way to test how far the
fitted local structure can serve as a constructive carrier for generating new
latent representations. For this reason, the paper analyzes three distinct
aspects separately: the quality of local approximation of the embedding cloud,
the fitted-geometry fidelity of generated points, and their subsequent
discriminative utility in a downstream task.

%%%%%%%%%%%%%%%%%%%%%%%%%%%%%%%%
\section{Experimental study design}
\label{sec:experimental_design}

The experimental study in this work is organized around three clearly formulated
research questions that directly correspond to the main hypotheses of the paper.
This structure follows the logic of geometry-aware generation: first, we evaluate
the adequacy of the local geometric model; next, we assess the geometric validity
of the generated points; and only then do we examine the relation between their
geometric validity and their discriminative utility in a downstream task.

\subsection{Research questions}

The experiments are designed to answer three main questions.

\begin{itemize}
    \item \textbf{RQ1.} Do controlled local classes of semantically close sentences form embedding clouds with measurable nonlinear geometry?
    \item \textbf{RQ2.} Can fitted local surfaces be used to generate synthetic latent points that preserve the fitted geometry of the original embedding cloud?
    \item \textbf{RQ3.} How do geometrically valid synthetic latent points affect downstream classification, and does their geometric validity coincide with discriminative utility?
\end{itemize}

%===================================================
\subsection{Data and text representation}

The experimental study in this work is conducted on a \emph{controlled generated
dataset of semantically close sentences}. This choice is essential.

Existing paraphrase corpora, such as PIT-2015, MRPC, or QQP, are not suitable for
testing the local geometric hypothesis considered in this paper. First, they do
not contain large controlled sets of variants of the same sentence template with
a known mechanism of variation. Second, fitted quadric models, and especially
fitted cubic models, require local embedding clouds whose size substantially
exceeds the number of model parameters. Already for a quadric in a reduced space
of dimension $r=29$, the number of coefficients is
$
465,
$
whereas for a cubic model it is
$
4960.
$
In the raw embedding space of dimension $d=768$, the situation is even more
extreme: the full cubic model contains
$$
p_3(768)=\binom{768+3}{3}=76{,}088{,}705
$$
coefficients. Thus, even one observed point per coefficient would require more
than $76$ million examples, while statistically meaningful estimation of such a
number of parameters would require hundreds of millions of samples. Third, in
natural corpora the source of semantic variation is not transparent, which makes
it impossible to separate the local geometry of a class from the accidental
heterogeneity of the sample. For this reason, the use of a controlled
template-based dataset in this work is a necessary condition for testing the
proposed hypothesis, rather than a workaround.

To reduce the risk of an artifact caused by a single sentence template, the work
uses three different \emph{template families}, which differ in semantic field,
syntactic slot content, and the parts of speech that are varied.

\paragraph{Template family A (institutional communication and policy).}
$$
\texttt{The minister \{\} that the \{\} would bring \{\} for \{\}.}
$$
This template varies four slots: a reporting verb, a noun denoting a policy or
initiative, a noun denoting benefits or consequences, and a noun denoting a group
of beneficiaries.

\paragraph{Template family B (educational context).}
\[
\texttt{The teacher designed a \{\} lesson with \{\} exercises for \{\} students in a \{\} course.}
\]
This template varies four slots: the characteristic of the lesson, the
characteristic of the exercises, the characteristic of the student group, and the
characteristic of the course. In contrast to template A, this case varies not only
verbs and nouns, but also adjectival or attributive modifiers.

\paragraph{Template family C (medical context).}
\[
\texttt{The doctor recommended a \{\} treatment with \{\} monitoring for \{\} patients} \\ \texttt{during \{\} recovery.}
\]
This template varies four slots: the characteristic of the treatment, the
characteristic of monitoring, the characteristic of the patient group, and the
characteristic of the recovery phase.

For each of the four slots in each template family, $18$ context-compatible and
semantically close lexical variants were prepared. The full lists of variants are
given in Appendix~\ref{app:lexical_variants}. In this work, we deliberately avoid
the stronger claim that all these lexical items are strict synonyms in the
lexicographic sense; what matters is that they define controlled local
variability without radically changing the overall semantic profile of the
sentence.

Within each template family, the same sequence of experimental regimes
$C1$--$C5$ is constructed:
\begin{gather*}
\begin{aligned}
C1 &: \text{only slot } s_1 \text{ varies}, \quad |C1| = 18,\\
C2 &: \text{only slot } s_2 \text{ varies}, \quad |C2| = 18,\\
C3 &: \text{slots } s_1 \text{ and } s_2 \text{ vary jointly}, \quad |C3| = 18^2 = 324,\\
C4 &: \text{slots } s_1,s_2,s_3 \text{ vary jointly}, \quad |C4| = 18^3 = 5{,}832,\\
C5 &: \text{slots } s_1,s_2,s_3,s_4 \text{ vary jointly}, \quad |C5| = 18^4 = 104{,}976.
\end{aligned}
\end{gather*}
When it is necessary to distinguish template families explicitly in the analysis,
we use the notation $A\!\!-\!C1,\dots,A\!\!-\!C5$,
$B\!\!-\!C1,\dots,B\!\!-\!C5$, and
$C\!\!-\!C1,\dots,C\!\!-\!C5$.

For each template family, the full four-slot combinatorial space contains
$
18^4 = 104{,}976
$
unique sentences, and across the three template families the largest complete
set of $C5$ regimes contains
$
3\cdot 18^4 = 314{,}928
$
unique sentences. For regimes $C1$--$C4$, the non-varying slots are fixed at
preselected anchor values listed in Appendix~\ref{app:lexical_variants}. This
makes each regime fully reproducible.

All sentences are encoded using the same sentence embedding model. This choice is
deliberate, as it avoids mixing two different factors: the geometry of the data
itself and differences between embedding-model architectures. As a result, the
obtained results can be interpreted as properties of the local geometry of
controlled semantic classes in a fixed embedding space.

The main object of analysis is the full sentence embedding space. For
visualization and illustrative geometric figures, a three-dimensional
dimensionality reduction by $\mathrm{PCA}(3)$ is used. This three-dimensional
representation, however, is not the space in which quadrics and cubics are fitted
directly.

Direct fitted modeling is performed in a reduced space whose number of principal
components preserves at least $90\%$ of the total variance of the corresponding
local embedding cloud. Thus, the dimensionality of the reduced space is selected
adaptively for each class, rather than being fixed in advance at three. In some
experiments, this led to the use of a substantially larger number of coordinates,
up to \(r=37\) principal components. This choice is essential: it substantially reduces the
instability of approximation in the raw high-dimensional embedding space while
avoiding the misleading impression that the fitted quadric or fitted cubic model
is constructed only on a coarse three-dimensional projection.

\begin{table}[ht]
\centering
\caption{Adaptive PCA dimensionality selected by the \(90\%\) explained-variance criterion for all template families and regimes.}
\label{tab:pca_dimensions}
\begin{tabular}{lrrrr}
\hline
\textbf{Regime} & \textbf{\(N\)} & \textbf{Original dim.} & \textbf{\(r\)} & \textbf{Explained variance} \\
\hline
A-C1 & 18     & 768 & 10 & 0.922544 \\
A-C2 & 18     & 768 & 12 & 0.901424 \\
A-C3 & 324    & 768 & 16 & 0.906127 \\
A-C4 & 5,832  & 768 & 25 & 0.902540 \\
A-C5 & 104,976 & 768 & 37 & 0.902179 \\
\hline
B-C1 & 18     & 768 & 12 & 0.909688 \\
B-C2 & 18     & 768 & 11 & 0.920178 \\
B-C3 & 324    & 768 & 22 & 0.910898 \\
B-C4 & 5,832  & 768 & 31 & 0.905914 \\
B-C5 & 104,976 & 768 & 37 & 0.900155 \\
\hline
C-C1 & 18     & 768 & 12 & 0.903941 \\
C-C2 & 18     & 768 & 10 & 0.916903 \\
C-C3 & 324    & 768 & 20 & 0.902768 \\
C-C4 & 5,832  & 768 & 29 & 0.902631 \\
C-C5 & 104,976 & 768 & 36 & 0.903614 \\
\hline
\end{tabular}
\end{table}

Table~\ref{tab:pca_dimensions} shows that the effective dimensionality of the
local embedding clouds increases with the complexity of the slot-variation
regime. One-slot regimes \(C1\) and \(C2\) require only \(10\)--\(12\) principal
components to reach the \(90\%\) variance threshold, whereas the full four-slot
regimes \(C5\) require \(36\)--\(37\) components. This pattern is consistent with
the construction of the dataset: as more slots vary jointly, the number of active
semantic degrees of freedom increases, and the corresponding embedding cloud
occupies a higher-dimensional local region of the representation space.

Thus, the input data in this work have a fully specified origin, a known
combinatorial structure, controlled local semantic variability, and a reproducible
construction mechanism. This makes them suitable for directly testing our main
hypothesis: local classes of semantically close sentences in embedding space have
a regular geometric structure, rather than an arbitrary one, and this structure
can be explicitly modeled and used for latent probing.

An additional outcome of the work is the dataset \textbf{CoPaGE-300K}
(\emph{Controlled Paraphrase Geometry Embeddings})~\cite{CoPaGE300K}, a controlled
template-based dataset of semantically close sentences with slot-level annotation
and precomputed sentence embeddings. The dataset contains more than
$3\cdot 10^5$ sentences across several template families and complexity regimes.
In contrast to standard paraphrase datasets, this dataset explicitly records the
sources of semantic variability through slot values, making it suitable for
studying the local geometry of embedding space, the manifold-like structure of
semantically close sentences, and representation-level generation methods. The
dataset should not be treated as a natural paraphrase corpus; its value lies in
its controlled design, full factorial structure, and support for reproducible
geometric analysis. All sentence embeddings in this work were computed using the
\texttt{sentence-transformers/all-mpnet-base-v2} model, which produces
$768$-dimensional vector representations of sentences. As an auxiliary cross-model robustness check, we also repeated the main
affine-versus-quadric geometric comparison with
\texttt{sentence-transformers/all-MiniLM-L6-v2}, a smaller sentence-transformer
model that produces \(384\)-dimensional embeddings. These additional results are
reported in Appendix~\ref{app:minilm_robustness} and are used only to assess
whether the main geometric pattern persists beyond the primary embedding model.

\subsection{Compared approaches}

To evaluate the contribution of explicit local geometric modeling, the paper
compares several baseline ways of constructing points in embedding space. These
baselines represent the simplest scenarios of working in latent space and do not
use a fitted local carrier.

First, we consider the scenario with \emph{no added synthetic points}. This
serves as the baseline regime for downstream experiments and makes it possible
to check whether adding new points changes the behavior of the classifier.

Second, we use \emph{linear interpolation} between existing embeddings. This
baseline models the simplest interpolation-based way of constructing new latent
points and makes it possible to separate the effect of simple point mixing from
the effect of explicit alignment with fitted local geometry.

Third, we consider \emph{local random perturbation} in the reduced PCA space.
This baseline models a local ``thickening'' of the embedding cloud without
explicitly taking its fitted geometry into account. It is needed to test whether
simple local noise-based expansion is sufficient, or whether construction of a
fitted carrier is necessary.

Finally, we compare these baselines with \emph{surface-based generation}, where
the local embedding cloud is approximated by a low-degree fitted surface and
synthetic latent points are constructed with respect to this local geometric
structure. This variant directly tests the contribution of fitted-geometry-aware
generation relative to linear mixing and local noise-based perturbation.

\subsection{Evaluation metrics}
\label{subsec:metrics}

Since this work distinguishes between the description of local structure in
embedding space and the downstream use of generated points, two groups of metrics
are used: metrics of local approximation quality and fitted-geometry fidelity,
and metrics of discriminative utility in the classification task.

\paragraph{Geometric metrics.}
To analyze the quality of approximation of the local cloud, we use three basic
quantities: RMSE, mean absolute deviation, and mean algebraic fitting residual.
All three are computed in the same reduced space in which the fitted model is
actually constructed. Let $z_1,\dots,z_N \in \mathbb{R}^r$ be the points of the
local embedding cloud after dimensionality reduction, and let $f(z)=0$ be the
fitted local model. Then the algebraic residual for the point $z_i$ is defined as
$f(z_i)$. For a normalized model, the mean algebraic residual is computed as
$$
\frac{1}{N}\sum_{i=1}^{N} |f(z_i)|.
$$
If the normalized algebraic residual of each point with respect to the fitted
implicit model is additionally computed, then RMSE and mean absolute deviation are defined in the standard way.

In-sample fitting errors are interpreted only as a diagnostic of how well a given
low-degree model can reproduce the observed local cloud. They are not used as the
sole evidence of geometric regularity or generalization. For this reason, the
interpretation of the local geometry experiment also relies on held-out
validation results, which test whether the fitted local structure remains stable
beyond the points used for fitting.

To assess the validity of the generated points, five groups of metrics are used.

\emph{Surface consistency (SC)} evaluates how well synthetic points satisfy the
fitted-surface constraint. Let \(z_1^\ast,\dots,z_M^\ast\) be the generated
points. Then
$$
\mathrm{SC}=\frac{1}{M}\sum_{j=1}^{M} \tilde d_j,
$$
where \(\tilde d_j\) is the normalized algebraic surface residual of the point
\(z_j^\ast\) with respect to the fitted implicit model. The normalization is performed with respect to a
characteristic scale of the local cloud, for example the average nearest-neighbor
distance or the average radius of the cloud in the reduced space.

For surface-based generation, this metric should be interpreted as a
constraint-satisfaction measure rather than as an independent validation of the
method, since the generated points are explicitly projected onto the fitted
surface. The more informative tests of geometric stability are Hessian-based shape consistency and coefficient consistency, which evaluate how the fitted local
model changes after synthetic points have been added.

\emph{Neighborhood consistency (NC)} evaluates how naturally synthetic points
embed into the local neighborhood structure of the original embedding cloud. For
each generated point $z_j^\ast$, we first find the set of its $k$ nearest
neighbors among the original points:
$$
\mathcal{N}_k(z_j^\ast)\subset Z_{\mathrm{orig}}.
$$
In the experiments, we used \(k=5\) for the neighborhood-based metrics. This
choice keeps the neighborhood scale local even for relatively small controlled
classes and avoids comparing neighborhoods that cover a large fraction of the
cloud. Neighborhood consistency is therefore used as a local comparative
diagnostic between generation methods.

Next, for the point $z_j^\ast$, we determine its nearest original point
$
z_{i(j)}.
$
We then compare the $k$-neighborhood of the synthetic point $z_j^\ast$ with the
$k$-neighborhood of its nearest real analogue $z_{i(j)}$:
$$
\mathcal{N}_k(z_{i(j)})\subset Z_{\mathrm{orig}}.
$$
The normalized measure of local neighborhood agreement is defined as the fraction
of shared neighbors:
$$
\mathrm{Overlap}_k(z_j^\ast)
=
\frac{
\left|
\mathcal{N}_k(z_j^\ast)
\cap
\mathcal{N}_k(z_{i(j)})
\right|
}{k}.
$$
The overall neighborhood consistency is then defined as the mean value of this
quantity over all $M$ synthetic points:
$$
\mathrm{NC}
=
\frac{1}{M}
\sum_{j=1}^{M}
\mathrm{Overlap}_k(z_j^\ast).
$$
Thus, $\mathrm{NC}$ is a normalized dimensionless quantity in the interval
$[0,1]$. Larger values mean that the synthetic points better preserve the local
nearest-neighbor relations of the original cloud.

\emph{Distribution consistency (DC)} measures how well synthetic points agree
with the empirical local distribution of the original cloud. In the practical
implementation, it is computed as the mean normalized Mahalanobis distance:
$$
\mathrm{DC}=\frac{1}{M}\sum_{j=1}^{M}
\sqrt{(z_j^\ast-\mu)^\top \Sigma^{-1}(z_j^\ast-\mu)},
$$
where $\mu$ and $\Sigma$ are the empirical mean and covariance matrix of the
local cloud in the reduced space. Smaller values of DC indicate better agreement
of the synthetic points with the local distribution.

\medskip 
\emph{Hessian-based shape consistency} characterizes how stable the
Hessian-based second-order descriptors of the fitted implicit model remain after
synthetic points have been added. Since the fitted carriers are implicit
polynomial hypersurfaces in the reduced PCA space, we do not use the classical
Gaussian and mean curvature formulas for two-dimensional surfaces in
\(\mathbb{R}^3\). Instead, we use Hessian-based local shape descriptors that are
defined for the implicit polynomial model in arbitrary reduced dimension.

Let
$$
f_{\mathrm{orig}}(z)=0
\quad \text{and} \quad
f_{\mathrm{ext}}(z)=0
$$
denote the fitted implicit models before and after adding synthetic points,
respectively. Here \(f_{\mathrm{orig}}\) is fitted on the original embedding
cloud, whereas \(f_{\mathrm{ext}}\) is fitted on the extended set containing both
the original and the generated points.

It is important to clarify that shape-related quantities are not assigned
directly to the finite embedding cloud. A finite point cloud does not have
classical differential curvature by itself. The descriptors used here are
computed for the smooth fitted local carrier that approximates the cloud. Thus,
the metric evaluates the stability of the fitted model's local second-order
behavior, rather than an intrinsic curvature of the raw point set.

Next, we choose a set of representative points
$$
p_1,\dots,p_L,
$$
from the local region of the original embedding cloud. In the implementation,
these points are sampled from the original cloud in the reduced PCA space, rather
than from arbitrary locations on the fitted hypersurface. Points at which the
gradient norm of the fitted implicit model is numerically too small are excluded
from this diagnostic, since the descriptors below are normalized by
\(\|\nabla f(p)\|+\varepsilon\). Thus, the Hessian-based shape consistency is
computed away from numerically singular or nearly singular locations of the
implicit model. For a fitted model \(f\), two local shape descriptors are used:
$$
d_F(p)
=
\frac{\|\nabla^2 f(p)\|_F}
{\|\nabla f(p)\|+\varepsilon},
\qquad
d_2(p)
=
\frac{\|\nabla^2 f(p)\|_2}
{\|\nabla f(p)\|+\varepsilon},
$$
where \(\|\cdot\|_F\) is the Frobenius norm, \(\|\cdot\|_2\) is the spectral
norm, and \(\varepsilon>0\) is a small stabilizing constant.

The descriptor \(d_F\) measures the overall magnitude of second-order variation
of the fitted implicit model, whereas \(d_2\) measures the largest local
second-order direction. These quantities are norm-based descriptors: they do not
encode the sign pattern of the Hessian and therefore do not distinguish, for
example, saddle-like and locally convex behavior with the same Hessian norm.
They should be interpreted as numerical diagnostics of second-order shape
magnitude, not as full differential-geometric curvature invariants. The normalization by $\|\nabla f(p)\|+\varepsilon$ reduces dependence on the
arbitrary scale of the implicit equation and improves numerical stability.
Nevertheless, the descriptors are not evaluated at points where the gradient norm
is numerically close to zero; such points are treated as unstable for this
diagnostic.

For each representative point \(p_\ell\), the descriptors are computed for both
\(f_{\mathrm{orig}}\) and \(f_{\mathrm{ext}}\):
$$
d_F^{\mathrm{orig}}(p_\ell),\quad
d_2^{\mathrm{orig}}(p_\ell),
\qquad
d_F^{\mathrm{ext}}(p_\ell),\quad
d_2^{\mathrm{ext}}(p_\ell).
$$
The relative drift of the Frobenius-based descriptor is defined as
$$
D_F
=
\frac{1}{L}
\sum_{\ell=1}^{L}
\frac{
\left|
d_F^{\mathrm{orig}}(p_\ell)
-
d_F^{\mathrm{ext}}(p_\ell)
\right|
}{
\left|
d_F^{\mathrm{orig}}(p_\ell)
\right|+\varepsilon
},
$$
and the relative drift of the spectral-norm descriptor is defined as
$$
D_2
=
\frac{1}{L}
\sum_{\ell=1}^{L}
\frac{
\left|
d_2^{\mathrm{orig}}(p_\ell)
-
d_2^{\mathrm{ext}}(p_\ell)
\right|
}{
\left|
d_2^{\mathrm{orig}}(p_\ell)
\right|+\varepsilon
}.
$$
Finally, Hessian-based shape consistency is defined as
$$
\mathrm{ShapeCons}
=
\frac{1}{2}\left(D_F+D_2\right).
$$
The value \(\mathrm{ShapeCons}\) is dimensionless. Smaller values indicate that
adding synthetic points does not substantially change the Hessian-based
second-order descriptors of the fitted implicit carrier. This metric should be
interpreted as an aggregate stability diagnostic for the fitted local model. It does not measure full geometric congruence of the two fitted hypersurfaces and
does not provide a direct linguistic interpretation of these Hessian-based shape
descriptors.

\medskip
\emph{Coefficient consistency} evaluates the stability of the algebraic
description of the fitted local model after synthetic points have been added. To
this end, the coefficient vector of the model fitted on the original cloud is
compared with the coefficient vector of the same model refitted on the extended
set. Since an algebraic model is defined only up to a nonzero scale factor, we
compare normalized rather than raw coefficient vectors:
$$
\hat{\theta}=\frac{\theta}{\|\theta\|_2}.
$$
Taking into account that the vectors \(\theta\) and \(-\theta\) define the same
algebraic surface, \emph{Coefficient consistency} is defined as
$$
\mathrm{CoeffCons}
=
\min\Bigl(
\|\hat{\theta}_{\mathrm{orig}}-\hat{\theta}_{\mathrm{ext}}\|_2,\;
\|\hat{\theta}_{\mathrm{orig}}+\hat{\theta}_{\mathrm{ext}}\|_2
\Bigr),
$$
where \(\hat{\theta}_{\mathrm{orig}}\) and \(\hat{\theta}_{\mathrm{ext}}\) are
the normalized coefficient vectors of the fitted model before and after adding
synthetic points. The value \(\mathrm{CoeffCons}\) is dimensionless; smaller
values indicate better preservation of the algebraic description of the fitted
local model on the extended set.

\medskip
\subsection{Experiment 1: local geometry analysis}

The first experiment is designed to test the hypothesis that the local embedding
cloud of a controlled semantic class has a nonlinear structure and is better
described by a low-degree surface than by a linear affine subspace.

For each local semantic class, three models are constructed: a linear affine
model, a quadric, and a cubic model. Approximation errors are then computed for
each model. The main comparison is made between the linear approximation and the
nonlinear surfaces. The quadratic model is treated as the main nonlinear model
for comparison, while the cubic model is used as an additional test for cases of
more complex local second-order shape. Thus, the key point is to test whether a quadric is
sufficient as the basic form of a local nonlinear carrier.

It is expected that, at least for some classes, the quadratic or cubic
approximation will be substantially more accurate than the linear one.

\subsection{Experiment 2: geometric validity of generated points}

The second experiment tests whether the generated latent points are 
geometrically admissible elements of the same local class as the original 
embedding cloud. Validity is evaluated not with respect to a predefined 
``true manifold'', but with respect to the local structure recovered from the 
data: the fitted local model, the neighborhood structure, and the empirical 
distribution.

For each local class, new points are generated in three ways: by linear 
interpolation, by local random perturbation, and by the proposed surface-based 
method.  In the main geometric-validity experiment, each method generated \(M=3000\)
synthetic latent points for each evaluated full-regime local class. To check
that the conclusions were not an artifact of this particular value, an additional
robustness analysis was performed for
\(n_{\mathrm{synth}}\in\{1000,3000,5000,10000\}\). The five geometric validity tests described in 
Subsection~\ref{subsec:metrics} are then applied to each set of synthetic 
points: surface consistency, neighborhood consistency, distribution consistency, 
Hessian-based shape consistency, and coefficient consistency.

These tests examine five different aspects of consistency. Surface consistency 
measures the closeness of generated points to the fitted surface. Neighborhood 
consistency tests whether the local nearest-neighbor structure is preserved. 
Distribution consistency shows whether the new points become statistical outliers 
with respect to the local distribution. Hessian-based shape consistency measures whether the Hessian-based
second-order descriptors of the fitted implicit carrier change after adding
synthetic points. Coefficient consistency evaluates the stability of the algebraic 
description of the fitted model after augmentation.

Surface-based generation is expected to produce points that preserve the local 
fitted geometry better than simple random perturbation and, at least in some 
cases, are geometrically more stable than points obtained by linear 
interpolation.

\subsection{Experiment 3: geometric validity and discriminative utility}

The third experiment tests whether synthetic latent points constructed by different methods have discriminative utility in downstream classification under a small training-data regime. The first two experiments focus on the local geometry of embedding clouds and the geometric validity of generated points. Here, we address a different question: whether such geometric validity translates into discriminative utility for a particular classifier.

This experiment does not assume that adding synthetic points to the train set automatically improves accuracy or Macro-F1. If the original embedding space already contains a sufficient discriminative signal, additional points may provide no benefit for constructing the decision boundary. In some cases, they may even shift the empirical distribution of the train set in a direction that does not improve classification. Therefore, the result of this experiment is interpreted as a diagnostic test of the relation between two different properties: the \emph{geometric validity} of synthetic points and their \emph{discriminative utility}.

The downstream model is \emph{logistic regression}. This choice reduces the influence of classification-architecture complexity and allows us to focus on whether generated points change class separability in representation space. Training is performed in a few-shot regime, that is, with a small number of real examples per class. For each regime, four scenarios are compared: no added synthetic points, linear interpolation, local random perturbation, and surface-based generation.

The initial classification of template families \(A\), \(B\), and \(C\) is used as a sanity check. If these families are separated almost perfectly even with a very small number of training examples, the task provides little information for assessing the effect of generated points. Therefore, we additionally consider a more difficult slot-based setting, in which classes correspond to the values of one varying slot and the remaining slots form the context. In the final version of this task, the target slot is \(s_4\); its \(18\) possible values define \(18\) classes, while the context is given by the triple of slots \((s_1,s_2,s_3)\).

To impose a stronger generalization requirement, we use a \emph{context-held-out split}: the train and test sets are constructed so that the contexts \((s_1,s_2,s_3)\) do not overlap. In this case, the classifier is evaluated on new combinations of context slots that were not available during training. This setting makes the task more difficult and provides a clearer test of whether generated points add useful information for generalization.

The aim of the experiment is not to demonstrate a necessary improvement in classification performance, but to test whether geometric validity and discriminative utility coincide. If surface-based points are well aligned with the fitted local surface but do not improve downstream classification, this does not refute their geometric validity. Such a result indicates that the practical use of generated points in classification tasks may require an additional selection mechanism, for example one that takes into account the decision boundary, the inter-class margin, model uncertainty, or the informational utility of a point.

\subsection{Minimal ablation}

To avoid overloading the paper with a large number of auxiliary implementation
variants, the ablation analysis in this work is deliberately limited to two
comparisons that directly concern the central methodological decisions of the
proposed approach. These comparisons have not only a technical but also a
conceptual role, since they make it possible to separate the genuinely geometric
contribution of the method from simple ways of constructing synthetic latent
points.

The first comparison concerns the choice of the local geometric model. In this
work, the quadratic model is treated as the basic working variant of nonlinear
approximation of a local embedding patch, whereas the cubic model is treated as a
more flexible but parametrically more complex generalization. The ablation by
surface type is therefore not intended to prove the superiority of the more
complex model as such, but to determine whether moving from a quadric to a cubic
surface gives a substantial and stable improvement in the accuracy of the local
description. If the cubic model improves the fit only slightly while
substantially increasing the number of parameters, then the quadric remains a
methodologically justified baseline model.

The second comparison concerns the role of projection onto the fitted local
surface. Here two generation variants are compared: \emph{(i)} barycentric
generation without subsequent geometric alignment and \emph{(ii)} barycentric
generation followed by projection onto the fitted surface. This comparison makes
it possible to separate the effect of simple internal mixing of existing
embedding points from the effect of explicit alignment with the local geometric
model. If the variant with projection has a smaller surface residual, smaller Hessian-based shape drift, and smaller coefficient drift, then the central contribution of
the method lies not only in creating new points, but specifically in bringing
them onto the fitted local carrier.

At the same time, downstream classification is not treated as a mandatory
success criterion for every ablation variant. Adding synthetic points to the
train set does not by itself guarantee an improvement in accuracy or Macro-F1.
Within the ablation analysis, it is therefore important to distinguish two
properties: \emph{geometric validity} and \emph{discriminative utility}. The
first characterizes the consistency of synthetic points with the local structure
of the embedding cloud; the second depends on the specific downstream task, the
classifier, data density, and the position of the decision boundary.

%%%%%%%%%%%%%%%%%%%%%%%%%%%%%%%%%%%%%%%%%

\section{Results of the experimental study and analysis}

%%%%%%%%%%%%%%%%%%%%%%%%%%%%%%%%%%%%%%%%

This section presents the results of the experimental evaluation of the proposed
approach to local geometric modeling and latent probing of controlled semantic
variation in sentence embedding space. The exposition follows the structure
defined in the experimental design section. We first analyze whether local
embedding clouds have nonlinear geometry and whether they can be approximated by
low-degree fitted carriers. We then examine the geometric validity of synthetic
latent points generated by different methods. Finally, we evaluate whether
fitted-geometry fidelity translates into discriminative utility in a low-data
downstream setting. The section concludes with an ablation analysis of the main
components of the proposed approach.

\subsection{Results of the local geometry study}

The first experiment tests whether local embedding clouds of controlled semantic
classes have a structure that goes beyond simple affine geometry. For each local
class, three types of models were compared: affine approximation, a quadratic
model, and a cubic model. Approximation quality was evaluated by the RMSE and MAE
of normalized residuals in the same reduced space in which the corresponding
fitted model was constructed.

All fitting errors reported in this experiment are computed in the adaptive
reduced PCA space selected separately for each regime. The corresponding
dimensions are reported in Table~\ref{tab:pca_dimensions}. This is important
because the fitted models are not constructed in the full \(768\)-dimensional
ambient space, nor in a fixed three-dimensional visualization space, but in the
effective local coordinate system of each embedding cloud.

As shown in Table~\ref{tab:fit_quality}, moving from the affine model to the
quadratic approximation noticeably reduces both RMSE and MAE for all
representative regimes reported here. The cubic model gives an additional
improvement in some cases, although this gain is not uniform across all classes.
This is consistent with the working hypothesis: a local embedding cloud has a
nonlinear structure, while the complexity of this structure depends on the
particular template family and the slot-variation regime.

\begin{table}[ht]
\centering
\caption{Comparison of local approximation quality of embedding clouds using different geometric models}
\label{tab:fit_quality}
\begin{tabular}{|c|c|c|c|}
\hline
\textbf{Regime} & \textbf{Model} & \textbf{RMSE} & \textbf{MAE} \\
\hline
A-C1 & Affine model   & 0.077470 & 0.060474 \\
A-C1 & Quadric        & 0.000000 & 0.000000 \\
A-C1 & Cubic model    & 0.000000 & 0.000000 \\
\hline
A-C3 & Affine model   & 0.186873 & 0.162540 \\
A-C3 & Quadric        & 0.005633 & 0.004195 \\
A-C3 & Cubic model    & 0.000000 & 0.000000 \\
\hline
A-C4 & Affine model   & 0.283923 & 0.232159 \\
A-C4 & Quadric        & 0.011931 & 0.009358 \\
A-C4 & Cubic model    & 0.004837 & 0.003649 \\
\hline
A-C5 & Affine model   & 0.229635 & 0.185434 \\
A-C5 & Quadric        & 0.016461 & 0.012982 \\
A-C5 & Cubic model    & 0.008195 & 0.006394 \\
\hline
B-C1 & Affine model   & 0.202334 & 0.144890 \\
B-C1 & Quadric        & 0.000000 & 0.000000 \\
B-C1 & Cubic model    & 0.000000 & 0.000000 \\
\hline
B-C3 & Affine model   & 0.184114 & 0.151228 \\
B-C3 & Quadric        & 0.002396 & 0.001300 \\
B-C3 & Cubic model    & 0.000000 & 0.000000 \\
\hline
B-C5 & Affine model   & 0.157735 & 0.125820 \\
B-C5 & Quadric        & 0.014219 & 0.010827 \\
B-C5 & Cubic model    & 0.007791 & 0.005873 \\
\hline
C-C1 & Affine model   & 0.190155 & 0.165462 \\
C-C1 & Quadric        & 0.000000 & 0.000000 \\
C-C1 & Cubic model    & 0.000000 & 0.000000 \\
\hline
C-C3 & Affine model   & 0.144995 & 0.115482 \\
C-C3 & Quadric        & 0.003477 & 0.002440 \\
C-C3 & Cubic model    & 0.000000 & 0.000000 \\
\hline
C-C5 & Affine model   & 0.117400 & 0.093637 \\
C-C5 & Quadric        & 0.016066 & 0.012460 \\
C-C5 & Cubic model    & 0.009763 & 0.007464 \\
\hline
\end{tabular}
\end{table}

The values \(0.000000\) in the table do not correspond to exact analytical
equality to zero, but to numerically very small errors that become zero after
rounding. Such values arise mainly for small or rigidly structured local classes,
where a low-degree model can almost exactly reproduce the observed cloud in the
corresponding reduced space. Therefore, the in-sample fit is not used as the only
argument in favor of a more complex model; approximation quality on held-out data
is also analyzed.

Therefore, the zero or near-zero in-sample values in small regimes should not be
read as independent evidence of a true geometric law. They indicate that the
chosen low-degree model is sufficiently flexible to reproduce the observed cloud
in the selected reduced space. The stronger evidence for non-affine local
structure is provided by the held-out validation comparison reported below.

\begin{figure}[ht]
\centering
 \includegraphics[width=0.89\textwidth]{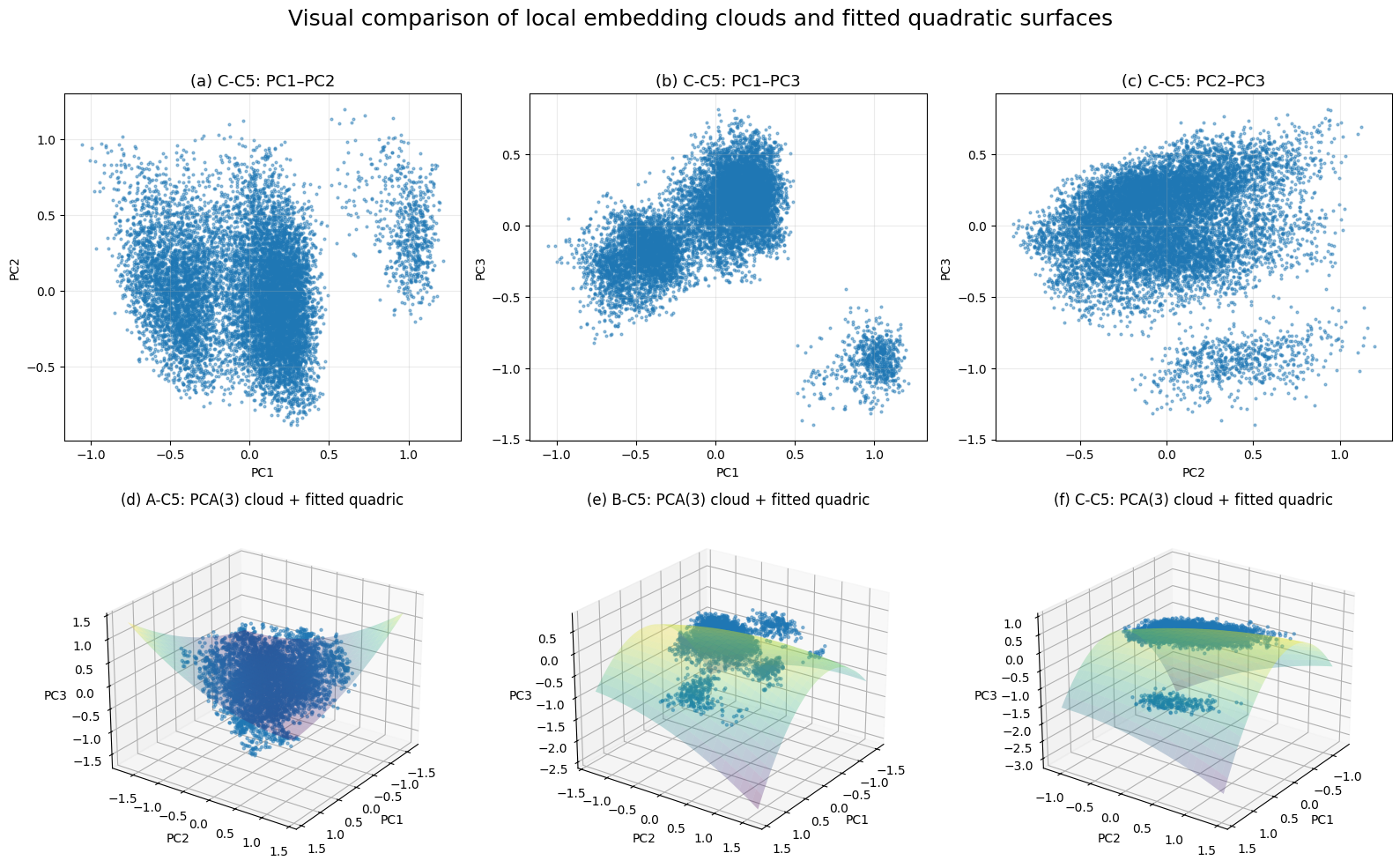}
\caption{Visual comparison of representative local embedding clouds and fitted quadratic surfaces.
Top row: pairwise PCA(3) projections of the C-C5 cloud, illustrating its clearly nonlinear structure. Bottom row: PCA(3) clouds for A-C5, B-C5, and C-C5 together with fitted quadratic surfaces shown for visualization purposes.}
\label{fig:local_cloud_quadric_generated}
\end{figure}

The visual analysis in Fig.~\ref{fig:local_cloud_quadric_generated} is consistent
with the quantitative results. Even in the three-dimensional PCA representation,
representative local clouds show a clear curvilinear structure, and the fitted
quadric reproduces their general shape for different template families. This
figure is illustrative: in the main experiments, fitted modeling is performed not
in PCA(3), but in the adaptive reduced space defined by the variance-preservation
criterion.

\begin{figure}[ht]
\centering
 \includegraphics[width=0.89\textwidth]{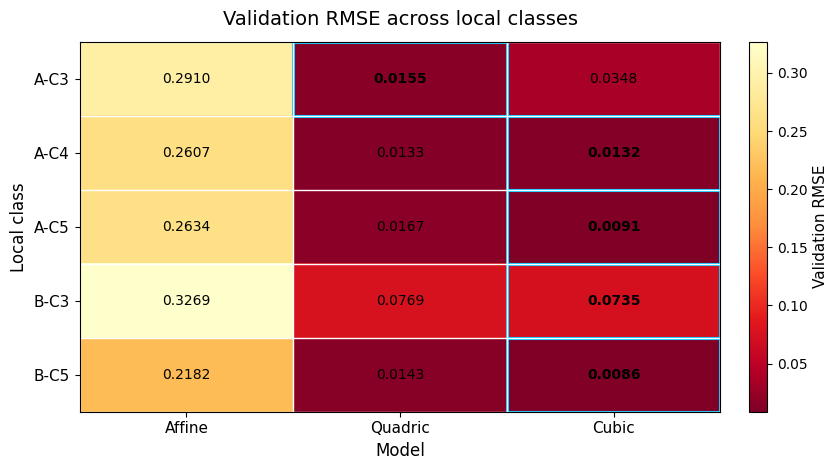}
\caption{Validation RMSE across representative local classes and geometric models.
Lower values indicate better approximation quality on held-out data. Affine models are consistently weaker, while quadratic and cubic models capture the local geometry substantially better.}
\label{fig:validation_rmse_heatmap}
\end{figure}

The quantitative comparison on held-out data is shown in
Fig.~\ref{fig:validation_rmse_heatmap}. The holdout analysis shows that the
advantage of nonlinear models is not reducible to simple overfitting to the
original embedding cloud. The affine model is systematically weaker than the
quadratic and cubic approximations, while the relation between the quadric and
the cubic depends on the particular class. In most of the considered regimes, the
cubic model gives the lowest validation RMSE values, but in some cases the
quadric already provides a strong baseline approximation.

Thus, the results of the first experiment confirm that the local geometry of
controlled semantic classes in embedding space generally cannot be reduced to a
simple affine structure. At the same time, they do not prove a universal
advantage of a single fixed model: the quadric serves as the baseline nonlinear
form, while the cubic surface is a useful extension for classes with more
complex local second-order shape.

%%%%%%%%%%%%%%%%%%%%%%%%%%%%%%%%%%%%%%%%%%%%%%%%%
\subsection{Results of geometric validity assessment of generated points}

The second experiment was aimed at testing whether new latent points generated by
different methods remain consistent with the local structure of the same class as
the original embedding cloud. To this end, three approaches to generating
synthetic points were compared: linear interpolation between existing points,
local random perturbation in the reduced PCA space, and the proposed
surface-based method.

It is important that the metrics used here do not describe the same aspect of
quality. Some of them characterize \emph{fitted-geometry fidelity}, that is, the
consistency of generated points with the fitted local carrier. Others characterize
\emph{empirical-density fidelity}, that is, the closeness of synthetic points to
the empirical density of the original cloud. These two goals are not identical
and may be in tension with each other.

The evaluation was performed using the five groups of criteria described in
Subsection~\ref{subsec:metrics}. In Table~\ref{tab:exp2_robustness_nsynth}, they
are represented by six numerical indicators, since local neighborhood consistency
is additionally characterized both by an overlap metric and by a deviation
measure. The \emph{Surface} metric evaluates the consistency of synthetic points
with the fitted local surface; \emph{Neighborhood} is the overlap of the local
nearest-neighbor structure; \emph{Neigh.\ dev.} is the normalized deviation of
the local neighborhood structure; \emph{Distr.\ dev.} is the deviation from the
empirical local distribution; Hessian-based shape consistency measures the change in Hessian-based
second-order descriptors of the fitted implicit carrier after adding synthetic
points; and \emph{Coefficient consistency} measures the change in normalized
coefficients of the fitted local model after expanding the cloud. For the
\emph{Neighborhood} metric, larger values are better, whereas for all other
metrics smaller values are better.

{\small 
\begin{table}[ht]
\centering
\caption{Comparison of synthetic latent point generation methods by two groups of criteria: fitted-geometry fidelity and empirical-density fidelity. For Surface, Neigh.\ dev., Distr.\ dev., Shape consistency, and Coefficient, smaller values are better; for Neighborhood, larger values are better.}
\label{tab:exp2_robustness_nsynth}
\begin{tabular}{llcccccc}
\hline
\(n_{\mathrm{synth}}\) & Method & Surface & Neighborhood & Neigh.\ dev. & Distr.\ dev. & Hess.\ shape & Coeff.\ cons.\\
\hline
1000 & Linear interpolation & 0.070755 & 0.739227 & 0.097070 & 0.209146 & 0.042890 & 0.213461 \\
1000 & Local perturbation   & 0.043045 & 0.898068 & 0.005455 & 0.060048 & 0.015380 & 0.066209 \\
1000 & Surface-based        & 0.000000 & 0.636248 & 0.469299 & 0.668771 & 0.000513 & 0.001547 \\
\hline
3000 & Linear interpolation & 0.068067 & 0.736525 & 0.111230 & 0.208537 & 0.076677 & 0.476835 \\
3000 & Local perturbation   & 0.043659 & 0.901314 & 0.008103 & 0.056265 & 0.032414 & 0.156697 \\
3000 & Surface-based        & 0.000000 & 0.639355 & 0.466354 & 0.657309 & 0.001019 & 0.002490 \\
\hline
5000 & Linear interpolation & 0.070725 & 0.733468 & 0.115292 & 0.202545 & 0.078814 & 0.604684 \\
5000 & Local perturbation   & 0.042220 & 0.900363 & 0.006882 & 0.060561 & 0.043751 & 0.214301 \\
5000 & Surface-based        & 0.000000 & 0.641130 & 0.462812 & 0.656208 & 0.001465 & 0.003314 \\
\hline
10000 & Linear interpolation & 0.072461 & 0.735955 & 0.122983 & 0.211337 & 0.552054 & 0.872295 \\
10000 & Local perturbation   & 0.043768 & 0.900970 & 0.007561 & 0.060564 & 0.097977 & 0.379360 \\
10000 & Surface-based        & 0.000000 & 0.639764 & 0.464961 & 0.657248 & 0.001711 & 0.004115 \\
\hline
\end{tabular}
\end{table}
}

The near-zero \emph{Surface} values for surface-based generation are expected,
because this method explicitly projects candidate points onto the fitted local
surface. They should therefore be interpreted as evidence that the projection
constraint is satisfied, rather than as an independent empirical advantage. The
more informative stability tests are Hessian-based shape consistency and coefficient
consistency, since they evaluate how the fitted local carrier changes after the
synthetic points have been added.

Table~\ref{tab:exp2_robustness_nsynth} does not show dominance of the
surface-based method across all criteria. It reveals a trade-off between two
different goals of generation. Local perturbation generates points directly in
small neighborhoods of existing embeddings; therefore, it better preserves the
empirical density of the cloud and obtains better values for the
\emph{Neighborhood}, \emph{Neigh.\ dev.}, and \emph{Distr.\ dev.} metrics. In
contrast, the surface-based method is optimized for preserving fitted local
geometry: consistency with the local surface,  second-order shape stability, and stability
of the fitted-model coefficients. This is why it obtains the best values for the
\emph{Surface}, \emph{Shape consistency}, and \emph{Coefficient} metrics.

This separation of results is not a numerical contradiction. Projection onto the
fitted surface can shift synthetic points relative to the nearest empirical
neighborhood regions, which explains the weaker neighborhood- and
distribution-based metrics. At the same time, this projection provides an almost
zero surface residual and a minimal change in the local analytic form of the
fitted carrier. Therefore, the surface-based method should not be interpreted as
a method that best reproduces the empirical density of the data; its advantage
lies in geometrically controlled preservation of the fitted local structure.

The additional comparison for different values of \(n_{\mathrm{synth}}\) confirms
that this separation is stable and is not an artifact of choosing \(3000\)
synthetic points. Thus, the second experiment shows not an absolute advantage of
one method across all metrics, but a difference between two types of validity. If
the goal is to reproduce the local empirical density as accurately as possible,
local perturbation is the natural choice. If the goal is to preserve the fitted geometry of the local manifold-like patch,
its Hessian-based second-order descriptors, and parameter stability, then surface-based generation has the methodological advantage.

Figure~\ref{fig:generated_points_comparison} presents a visual comparison of
synthetic points generated by the three methods for one representative local
class in the \(\mathrm{PCA}(3)\) space. This figure makes it possible to see
directly that linear interpolation forms points along internal chords of the
cloud, local random perturbation concentrates near empirical observations, and
the surface-based method generates points that are consistent with the fitted
surface. At the same time, the figure is illustrative: the quantitative metrics
were computed in the adaptive reduced space, not only in the three-dimensional
PCA projection.

\begin{figure}[ht]
\centering
\begin{center}
\includegraphics[width=0.89\textwidth]{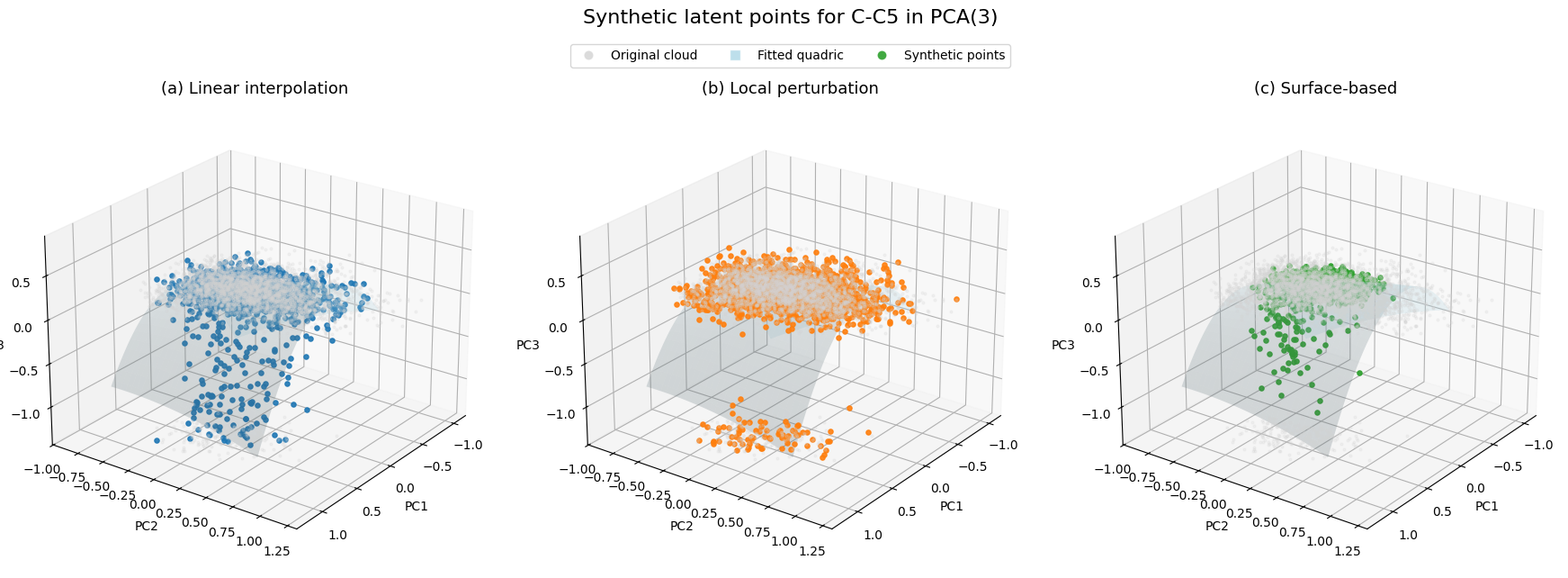}
\end{center}
\caption{Comparison of synthetic latent points generated by linear interpolation, local random perturbation, and the proposed surface-based method for one representative local class in the \(\mathrm{PCA}(3)\) space.}
\label{fig:generated_points_comparison}
\end{figure}

Thus, the second experiment shows that the geometric validity of generated latent
points should be understood as multidimensional. Surface-based generation best
preserves the fitted local geometry, whereas local perturbation more accurately
reproduces the empirical density of the original cloud. This result is not an
assessment of the downstream utility of synthetic points; it only establishes
which aspects of the local structure are preserved by each generation method.

%%%%%%%%%%%%%%%%%%%%%%%%%%%%%%%%%%%%
\subsection{Geometric validity and downstream discriminative utility}

The third experiment was aimed at examining the relation between two different properties of synthetic latent points: their geometric validity with respect to the fitted local structure and their discriminative utility in a downstream task. In this setting, downstream classification is used not as the sole criterion of success for the method, but as a diagnostic test of whether fitted-geometry fidelity translates into practical benefit for a classifier.

The initial task of classifying template families \(A\), \(B\), and \(C\) turned out to be too simple: a linear classifier separated these families almost perfectly even with a very small number of training examples. This means that the embedding space already contains a strong discriminative signal for this setting, so adding synthetic points does not provide a meaningful test of their effect. For this reason, we constructed a more difficult \emph{context-held-out slot classification task}.

This task was built on the full four-slot space of template family \(C\), that is, on the \(C\text{-}C5\) regime. The target slot was chosen to be \(s_4\), which has \(18\) possible values; each value of \(s_4\) was treated as a separate class. Thus, the downstream task was an \(18\)-class classification problem. The context for the target slot \(s_4\) was defined as the triple of the remaining slots
$$
(s_1,s_2,s_3).
$$
Since each of these slots has \(18\) values, the total number of possible contexts was
$
18^3 = 5832.
$
For each value of the target slot \(s_4\), there was one sentence in each such context. Therefore, each class contained \(5832\) examples, and the full \(C\text{-}C5\) set contained
$
18 \times 5832 = 104976
$
sentences.

The constructed lookup contained exactly one sentence for each pair of target
label and context, with no duplicate \((\text{label},\text{context})\) entries;
therefore, the task has the full factorial label-context structure required by
the context-held-out protocol.

The train/test split was constructed at the level of contexts, rather than at the level of individual sentences. For each random run, train contexts and test contexts were selected so that they did not overlap. If a context \((s_1,s_2,s_3)\) was assigned to the train set, then all sentences with this context, for all \(18\) values of \(s_4\), were used only for training. If a context was assigned to the test set, then all corresponding sentences were used only for testing. This ensured that the classifier was evaluated on new combinations of contextual slots that it had not seen during training.

In the \(k\)-shot regime, \(k\) train contexts were used. Thus, the train set contained
$
k \times 18
$
real training examples, that is, \(k\) examples for each of the \(18\) classes. For testing, \(300\) test contexts were used, giving
$
5400
$
test examples. For each train split, the results were averaged over \(10\) random runs. Synthetic points were added only to the train set. The test set was not used for
PCA construction, fitted-surface estimation, or synthetic point generation.

For each \(k\)-shot train split, the number of synthetic points added by each
generation method was equal to the number of real training examples. Since the
train set contains \(18k\) real examples, each method added \(M=18k\) synthetic
points, corresponding to a synthetic-to-real ratio of \(1:1\). Thus,
\(M=18,36,54,90,180\) for \(k=1,2,3,5,10\), respectively.

For very small \(k\), the surface-based variant may be underdetermined for some
classes because too few real points are available to fit a stable quadratic
carrier in the train-only reduced space. In such cases, the implementation uses
a fallback internal interpolation for that class. This is one reason why the
downstream experiment is interpreted diagnostically rather than as a direct test
of the full surface-based construction under all few-shot conditions.

The results of context-held-out slot classification for different few-shot regimes are shown in Table~\ref{tab:exp3_downstream_context_heldout}.

\begin{table}[ht]
\centering
\caption{Results of context-held-out slot classification for \(C\text{-}C5\), target slot \(s_4\). Values are reported as mean \(\pm\) standard deviation over 10 random runs.}
\label{tab:exp3_downstream_context_heldout}
\small
\begin{tabular}{clcc}
\hline
\textbf{Train size} & \textbf{Method} & \textbf{Accuracy} & \textbf{Macro-F1} \\
\hline
1 & No synthetic points & \(0.4439 \pm 0.0883\) & \(0.4179 \pm 0.0808\) \\
1 & Linear interpolation & \(0.4213 \pm 0.1012\) & \(0.4046 \pm 0.0967\) \\
1 & Local random perturbation & \(0.4212 \pm 0.1014\) & \(0.4045 \pm 0.0969\) \\
1 & Surface-based generation & \(0.4213 \pm 0.1012\) & \(0.4046 \pm 0.0967\) \\
\hline
2 & No synthetic points & \(0.3742 \pm 0.0369\) & \(0.3334 \pm 0.0520\) \\
2 & Linear interpolation & \(0.3640 \pm 0.0354\) & \(0.3346 \pm 0.0472\) \\
2 & Local random perturbation & \(0.3355 \pm 0.0380\) & \(0.3046 \pm 0.0458\) \\
2 & Surface-based generation & \(0.3654 \pm 0.0310\) & \(0.3315 \pm 0.0461\) \\
\hline
3 & No synthetic points & \(0.3712 \pm 0.0706\) & \(0.3237 \pm 0.0829\) \\
3 & Linear interpolation & \(0.3662 \pm 0.0637\) & \(0.3310 \pm 0.0761\) \\
3 & Local random perturbation & \(0.3456 \pm 0.0688\) & \(0.3070 \pm 0.0731\) \\
3 & Surface-based generation & \(0.3514 \pm 0.0752\) & \(0.3127 \pm 0.0899\) \\
\hline
5 & No synthetic points & \(0.3390 \pm 0.0431\) & \(0.2908 \pm 0.0543\) \\
5 & Linear interpolation & \(0.3379 \pm 0.0423\) & \(0.3018 \pm 0.0520\) \\
5 & Local random perturbation & \(0.3128 \pm 0.0441\) & \(0.2810 \pm 0.0455\) \\
5 & Surface-based generation & \(0.3210 \pm 0.0466\) & \(0.2793 \pm 0.0556\) \\
\hline
10 & No synthetic points & \(0.3565 \pm 0.0451\) & \(0.3163 \pm 0.0572\) \\
10 & Linear interpolation & \(0.3500 \pm 0.0414\) & \(0.3160 \pm 0.0539\) \\
10 & Local random perturbation & \(0.3407 \pm 0.0474\) & \(0.3100 \pm 0.0589\) \\
10 & Surface-based generation & \(0.3525 \pm 0.0376\) & \(0.3173 \pm 0.0486\) \\
\hline
\end{tabular}
\end{table}

The identical results of linear interpolation and surface-based generation in
the \(1\)-shot regime are explained by the fallback behavior of the
surface-based procedure. With only one real example per class, a stable local
surface cannot be fitted in the train-only reduced space. In this case, the
surface-based variant reduces to an internal interpolation-type construction.
This affects only the most extreme few-shot setting and is one of the reasons why
the downstream experiment is interpreted diagnostically rather than as a direct
test of the full surface-based construction under all data regimes.

The results do not show a stable downstream gain from adding synthetic latent points. In most few-shot regimes, the baseline with no synthetic points remains the best in terms of accuracy or close to the best. Linear interpolation and surface-based generation behave more stably than local random perturbation, but they do not provide a systematic improvement over the no-synthetic-points baseline. Local random perturbation most often gives the weakest results, indicating the risk of uncontrolled noise-based expansion of the train set.

The observed differences between methods are small relative to the variability
across random context splits. Therefore, the downstream experiment should be
interpreted as a diagnostic test rather than as evidence of a statistically
robust performance advantage of any generation method. Its main role is to show
that fitted-geometry fidelity does not automatically imply discriminative
utility.

The non-monotonic behavior of the results with respect to \(k\) reflects the difficulty of the context-held-out setting: performance depends not only on the number of train contexts, but also on which contextual combinations are available during training. Therefore, increasing the number of few-shot examples does not necessarily lead to a monotonic improvement in accuracy or Macro-F1 for every set of random splits.

Importantly, this result does not contradict the conclusions of the first two experiments. Surface-based generation shows good fitted-geometry fidelity: the generated points are consistent with the fitted surface, stabilize local second-order shape, and better preserve the coefficients of the fitted model. However, these properties do not guarantee that the points add new information for constructing the decision boundary. In other words, a geometrically valid point may be correct with respect to a local manifold-like patch, but discriminatively redundant for a particular classifier.

A similar logic is discussed in the literature on data augmentation for NLP. In particular, Longpre et al.~\cite{Longpre2020} explain the limited effectiveness of many simple augmentation methods for large pretrained language models by the fact that such models have already learned invariance to many surface-level or lexical transformations. Our setting does not involve fine-tuning a language model, but rather logistic regression on top of sentence embeddings; however, the general conclusion is methodologically close: if synthetic examples are too close to the original ones in representation space, they may fail to add new discriminative patterns and mostly duplicate information already present in the data.

Thus, the third experiment clarifies the limits of interpretation of the proposed approach. It shows that fitted-geometry fidelity and downstream discriminative utility are distinct properties. The proposed method should not be interpreted as a universal classification booster or as an automatic way to improve accuracy or Macro-F1. Its main role is geometrically controlled modeling of local semantic patches and construction of synthetic latent points that are valid with respect to the fitted local structure.

The practical consequence is that using surface-based generation in downstream classification requires an additional mechanism for selecting synthetic points. Such a mechanism should take into account not only whether a point lies on the fitted surface, but also its potential contribution to class separation, for example its distance to the decision boundary, classifier uncertainty, the inter-class margin, or the informational utility of the point in active learning. Without such discriminative filtering, geometrically correct latent generation may remain redundant for models that already have strong representation-level invariances.

%%%%%%%%%%%%%%%%%%%%%%%%%%%%%%%%%%%%%%%%%%%%%%%%%%%%%%%%%%%%%%
\subsection{Ablation analysis}

The ablation analysis in this work has a methodological rather than an engineering
purpose. Its aim is not to prove a guaranteed downstream-quality gain from each
generation variant, but to separate the contribution of individual components of
the proposed scheme: the type of fitted local model, projection onto the fitted
surface, and the addition of synthetic points to the train set. This is consistent
with the general interpretation of the work, where geometric validity and
discriminative utility are treated as distinct properties.

The first ablation concerns the choice of the local geometric model. The
comparison of affine, quadratic, and cubic approximations shows that the affine
model does not adequately describe the local structure of embedding clouds. The
quadratic model substantially reduces the approximation error and is therefore
treated as the baseline working nonlinear model. The cubic model provides an
additional fit improvement in some cases, but has a substantially larger number
of parameters. Thus, it should be interpreted as an optional local extension for
more complex classes, rather than as a universal replacement for the quadric.

The second ablation concerns the role of projection onto the fitted local surface.
The results of the second experiment show that surface-based generation provides
an almost zero surface residual and the lowest values of the metrics associated
with fitted-geometry fidelity, in particular Hessian-based shape consistency and
coefficient consistency. This means that projection plays not only a technical
role, but also a methodological one: it aligns synthetic points with the local
fitted carrier and distinguishes the proposed approach from simple internal
mixing or noise-based perturbation of points.

At the same time, this geometric advantage does not imply an automatic downstream
gain. Therefore, in the ablation analysis, classification results are reported as
deviations from the baseline with no synthetic points. This makes it possible to
directly assess whether each type of synthetic points adds discriminative utility
compared with training only on real examples. The corresponding results for the
10-shot context-held-out slot task are shown in
Table~\ref{tab:ablation_downstream}. The columns \emph{Best Acc. count} and
\emph{Best F1 count} indicate in how many of the 10 random runs the corresponding
method achieved the best accuracy or Macro-F1.

\begin{table}[ht]
\centering
\caption{Ablation comparison of the downstream effect of generation methods relative to the baseline with no synthetic points for the 10-shot context-held-out slot task. \(\Delta\) denotes the difference relative to the baseline with no synthetic points.}
\label{tab:ablation_downstream}
\begin{tabular}{lcccc}
\hline
\textbf{Method} &
\(\Delta\)\textbf{Accuracy} &
\(\Delta\)\textbf{Macro-F1} &
\begin{tabular}[c]{@{}c@{}}\textbf{Best Acc.}\\ \textbf{count}\end{tabular} &
\begin{tabular}[c]{@{}c@{}}\textbf{Best F1}\\ \textbf{count}\end{tabular} \\
\hline
No synthetic points & 0.0000 & 0.0000 & 5/10 & 3/10 \\
Linear interpolation & -0.0065 & -0.0002 & 1/10 & 1/10 \\
Local random perturbation & -0.0158 & -0.0063 & 1/10 & 1/10 \\
Surface-based generation & -0.0040 & +0.0010 & 3/10 & 5/10 \\
\hline
\end{tabular}
\end{table}

As shown in Table~\ref{tab:ablation_downstream}, none of the scenarios with
synthetic points provides a stable accuracy improvement over the baseline. Local
random perturbation has the weakest average effect, indicating the risk of
uncontrolled noise-based expansion of the train set. Surface-based generation is
slightly below the baseline in accuracy, but remains the most competitive among
the scenarios with synthetic points and has the largest number of wins in terms
of Macro-F1. However, these differences are not sufficient to claim a systematic
downstream advantage.

Thus, the ablation analysis confirms the role of projection onto the fitted
surface primarily from the geometric perspective: it best preserves the local
analytic and local second-order structure of the embedding cloud. From the perspective of
downstream classification, this is not sufficient: geometrically correct points
may be redundant if they do not add new discriminative patterns or refine the
decision boundary. This is consistent with the main conclusion of the third
experiment, namely the distinction between fitted-geometry fidelity and
downstream discriminative utility.

Therefore, surface-based generation should be viewed primarily as an offline tool
for geometrically controlled latent-space generation and analysis of local
semantic patches, rather than as a universal mechanism for improving
classification. Practical use of generated points in downstream tasks requires
additional discriminative filtering or active selection, which takes into account
not only whether a point lies on the fitted surface, but also its potential
contribution to the construction of the decision boundary.

\subsection{Cross-model robustness check}

To assess whether the main geometric pattern is specific to the primary
embedding model, we performed an auxiliary cross-model robustness check with
\texttt{sentence-transformers/all-MiniLM-L6-v2}. This model produces
\(384\)-dimensional sentence embeddings, whereas the main experiments use
\texttt{sentence-transformers/all-mpnet-base-v2} with \(768\)-dimensional
embeddings. The purpose of this check is not to repeat the full experimental
pipeline, but to test whether the affine-versus-quadratic contrast persists in a
different sentence-transformer embedding space.

For \texttt{all-MiniLM-L6-v2}, the adaptive PCA analysis again showed that the
effective dimensionality of the controlled semantic clouds increases with the
complexity of the slot-variation regime. In the full four-slot regimes, the
\(90\%\) explained-variance criterion selected \(r=38\), \(r=39\), and \(r=38\)
for \(A\text{-}C5\), \(B\text{-}C5\), and \(C\text{-}C5\), respectively. The full
PCA summary for this auxiliary check is reported in
Appendix~\ref{app:minilm_robustness}.

For computational reasons, the affine-versus-quadric fitting check was then
performed in a fixed reduced space of dimension \(r=20\). This setting is
deliberately lighter than the full \(90\%\)-variance reduced spaces and is
therefore interpreted only as an auxiliary robustness check. For the MiniLM
\(C5\) regimes, this \(r=20\) space preserves \(0.6978\), \(0.6791\), and
\(0.7005\) of the total PCA variance for \(A\text{-}C5\), \(B\text{-}C5\), and
\(C\text{-}C5\), respectively. Thus, the auxiliary check is intentionally
performed in a compressed local coordinate space, and the results should be read
as a conservative test of the affine-versus-quadric contrast.

More specifically, for \(A\text{-}C5\), the surface RMSE decreased from
\(0.732287\) for the affine model to \(0.114082\) for the quadric model. For
\(B\text{-}C5\), it decreased from \(0.482266\) to \(0.082574\), and for
\(C\text{-}C5\), from \(0.550278\) to \(0.074483\). Thus, the nonlinear local
structure detected in the main experiments is not merely an artifact of the
original \(768\)-dimensional MPNet-based embedding space.

This auxiliary check has a limited scope: it does not replace a systematic
cross-model study over many embedding architectures, nor does it repeat the full
geometric-validity and downstream evaluation pipeline. Nevertheless, it supports
the qualitative robustness of the central geometric conclusion: controlled
semantic variation induces local embedding clouds that are better described by
nonlinear fitted carriers than by affine approximations.

\medskip
Taken together, the experiments show that controlled semantic variation induces measurable local geometry in sentence embedding space. This geometry can be described by low-degree fitted models and used to construct synthetic latent points consistent with the fitted local structure. At the same time, the downstream experiments show that fitted-geometry fidelity does not guarantee discriminative utility. This distinction provides the basis for the discussion that follows.

\section{Discussion}

The study confirms that controlled template-based classes of semantically close sentences exhibit local geometric regularity in sentence embedding space. This regularity is reflected in the fact that the corresponding embedding clouds are better described by nonlinear fitted models than by affine subspaces. The results support the working local manifold hypothesis: semantic variation within a narrow controlled class has a reproducible local structure.

The proposed approach is best understood as a tool for local geometric analysis of representation space. Its main purpose is to explicitly model the local geometry of controlled semantic patches in latent space and to examine which properties are preserved by synthetic latent points constructed with respect to a fitted local carrier. The computational cost of the method should therefore be associated with an offline procedure for geometric analysis, latent probing, and controlled latent-space generation.

PCA plays a stabilizing role in this work. Polynomial fitting and Newton-type projection are performed in a reduced PCA space of dimension \(r\), where the dominant variability of the corresponding local cloud is preserved. This step is consistent with nonlinear modeling, since the fitted model describes the effective local geometry of a particular semantic patch rather than the entire ambient embedding space.

The recorded PCA summaries confirm that this step behaves consistently with the
experimental design. The selected dimension grows from \(10\)--\(12\) components
in one-slot regimes to \(36\)--\(37\) components in full four-slot regimes. This
increase reflects the larger number of jointly varying lexical slots and supports
the interpretation that the reduced space captures the effective local degrees of
semantic variation rather than an arbitrary projection dimension.

The present experiments use the \(90\%\) explained-variance threshold for
selecting the reduced PCA dimension. As a diagnostic check, the same recorded
variance profiles also show that lowering the threshold to \(85\%\) would reduce
the selected dimension by a moderate amount, while preserving the same qualitative
ordering of regimes by complexity. A full sensitivity analysis for higher
thresholds, such as \(95\%\), would require retaining additional principal
components beyond those stored in the present experimental archive and is left
for future work.

The results do not support a rigid hypothesis about the existence of a single universal surface type for all local classes. In many cases, the quadratic model provides a sufficiently strong baseline nonlinear approximation, balancing expressiveness, interpretability, and stability. The cubic model can provide a better fit in more complex cases, but it has a larger number of parameters and is therefore better interpreted as an optional local extension for classes with more complex geometry. The general interpretation is that the class of the fitted model should be selected adaptively according to the local structure of a particular embedding cloud. In the experiments reported here, the fitted quadratic carriers were in almost all cases classified as one-sheeted hyperboloid-type quadrics. This observation suggests that local semantic variation is better viewed as an open curved patch in the reduced representation space than as a closed ellipsoidal cluster. We treat this classification as an additional geometric diagnostic; a detailed linguistic interpretation of quadric types remains a direction for future work.

The comparison with baseline generation methods shows that the surface-based method has an advantage in terms of fitted-geometry fidelity. It provides better agreement with the fitted surface, smaller changes in local second-order shape, and greater stability of the fitted-model coefficients. At the same time, this method does not dominate across all metrics. Neighborhood- and distribution-based criteria measure closeness to the empirical density of the finite cloud. Under such criteria, local perturbation may be preferable, since it generates points directly in small neighborhoods of existing observations. Thus, the second experiment reveals a trade-off between empirical-density fidelity and fitted-geometry fidelity.

Downstream classification clarifies the limits of directly using surface-based generation in classification tasks. In low-data and context-held-out slot classification, the baseline with no synthetic points often remained the best or close to the best. This result is consistent with the geometric experiments, since geometric validity and discriminative utility characterize different properties. A geometrically valid point may agree well with a local manifold-like patch, but provide no benefit to a particular classifier if it does not change or refine the decision boundary.

This result is methodologically consistent with broader observations in the literature on NLP data augmentation. If a model or representation space already has invariance to certain lexical or paraphrase-like changes, additional examples close to existing ones in latent space may be redundant. In our case, we observe a geometric analogue of this effect: surface-based points are correct with respect to the fitted local geometry, but remain in the same local region of representation space and therefore do not necessarily add new discriminative patterns.

The main methodological conclusion is that the proposed approach should primarily be evaluated by whether it provides an explicit local model of semantic variation in embedding space. Such a model can be useful for constructing synthetic latent points, analyzing the structure of representation space, assessing the stability of local semantic neighborhoods, and defining diagnostic criteria for embedding models.

The potential applications of this approach go beyond direct classification. Fitted local surfaces and the corresponding residual-based characteristics can be used for semantic neighborhood expansion, retrieval and reranking, OOD detection, adversarial filtering, uncertainty estimation, hallucination analysis, and the construction of hard positive examples in contrastive learning. In these tasks, it is important to measure, preserve, or control the local geometry of embedding space. For this reason, they are more naturally aligned with the proposed method than the simple task of improving a linear classifier.

The work has several limitations. First, the proposed approach is local in nature. It operates with local controlled semantic patches and does not assume the existence of a single surface for the entire space of texts. Second, the main experiments are conducted on a synthetically constructed template-based dataset. This is a controlled benchmark designed to isolate sources of semantic variation and study the local geometry of embedding clouds. Such control is a methodological advantage for testing the geometric hypothesis, but it also limits the direct transferability of the conclusions to less structured natural corpora.

Third, the results depend on the chosen sentence embedding model. The main
experiments were conducted with
\texttt{sentence-transformers/all-mpnet-base-v2}, so the fitted carriers and
shape-based characteristics should primarily be interpreted as properties of this
representation space. To partially address this limitation, we added an
auxiliary check with \texttt{sentence-transformers/all-MiniLM-L6-v2}. It showed
the same qualitative affine-versus-quadric pattern for the full \(C5\) regimes,
but it does not replace a full cross-model validation. Broader testing across
other embedding families remains future work.

Fourth, downstream evaluation was performed only for one baseline classifier, namely logistic regression. This makes it possible to interpret the results primarily as an effect of the generation method, while reducing the influence of a more complex classification architecture. At the same time, this limits the scope of practical conclusions. Whether the discriminative utility of generated points changes for SVMs, MLPs, or fine-tuned transformer-based models requires separate investigation.

Fifth, the method in its current form is oriented toward offline scenarios. Its computational logic involves forming a local embedding cloud, performing reduced-space fitting, applying surface-based generation, and conducting geometric validity analysis. This format is natural for research, diagnostic, and benchmark scenarios, although it should not be interpreted as a cheap substitute for simple engineering heuristics.

A separate outcome of the work is the \textbf{CoPaGE-300K} dataset (\emph{Controlled Paraphrase Geometry Embeddings}). Its value lies in its controlled template-based structure, slot-level annotation, and precomputed embeddings for reproducible analysis of the local geometry of semantic variation. In this sense, CoPaGE-300K can serve as a benchmark resource for further studies of representation-space geometry, controlled paraphrase variation, and geometry-aware latent generation.

Several directions for future work are natural. First, it is useful to move from a fixed choice of a quadratic or cubic model to an adaptive selection mechanism for the local fitted carrier. Second, the stability of the main geometric patterns should be tested for other embedding models and less controlled paraphrase corpora. Third, fitted-geometry fidelity should be combined with discriminative filtering, so that the selection of synthetic points takes into account not only membership in the fitted surface, but also the potential contribution of the point to the decision boundary. Fourth, the theoretical relation between the local geometry of embedding clouds and invariants of meaning-preserving transformations requires further development.

Thus, the proposed approach is a method for local geometric modeling of semantic variation in sentence embedding space. Its main contribution is to show that controlled semantic variation induces measurable local geometry that can be explicitly approximated, analyzed, and used for geometry-aware latent generation.

%%%%%%%%%%%%%%%%%%%%%%%%%%%%%%%%%%%%%%%%%%%%

\section{Conclusions}

%%%%%%%%%%%%%%%%%%%%%%%%%%%%%%%%%%%%%%%%%%%%

This paper studied the local geometry of embedding clouds arising from controlled
template-based classes of semantically close sentences. The proposed approach is
aimed at explicit modeling of the local structure of semantic variation in
sentence embedding space and at constructing synthetic latent points based on a
fitted local carrier. In this sense, the method should be viewed as an offline
procedure for local geometric analysis, latent probing, and geometry-aware latent
generation.

The experimental results confirm that controlled local classes of semantically
close sentences have measurable nonlinear structure in embedding space. Affine
models often do not adequately describe such embedding clouds, whereas quadratic
models provide a strong baseline nonlinear approximation of the local patch.
Cubic models improve the fit in some cases, but due to their larger number of
parameters they are better interpreted as an optional local extension for more
complex classes. Thus, the natural interpretation is adaptive local geometric
modeling, where the type of fitted carrier is selected according to the structure
of a particular embedding cloud.

A separate result is that local geometry can be used constructively: a fitted
surface can serve as a local carrier for constructing synthetic latent points.
All such computations are performed in a reduced \(r\)-dimensional PCA space,
where the dominant variability of the local cloud is preserved and polynomial
fitting and surface-based projection become computationally more stable.

In terms of fitted-geometry fidelity, the proposed surface-based method has an
advantage over simpler baseline approaches. It better preserves agreement with the fitted surface, Hessian-based
second-order shape descriptors, and the stability of the fitted-model
coefficients. At the same time, neighborhood- and distribution-based
criteria measure a different property: closeness to the empirical density of the
finite cloud. Under such criteria, local perturbation may be preferable, since it
generates points directly in neighborhoods of existing observations. The work
therefore reveals an important trade-off between empirical-density fidelity and
fitted-geometry fidelity.

The downstream experiments showed that geometric validity of synthetic latent
points does not guarantee an automatic improvement in classification
performance. In low-data and context-held-out slot classification, the baseline
with no synthetic points often remained the best or close to the best. This
supports one of the main conclusions of the work: \emph{geometric validity} and
\emph{discriminative utility} are distinct properties. A point may be well
aligned with the fitted local geometry while adding no new information for
constructing the decision boundary. Surface-based generation should therefore be
interpreted primarily as a tool for local geometric modeling and latent probing,
rather than as a universal classification booster.

The scope of the results is shaped by the controlled nature of the data and by
the choice of the embedding model. The main experiments were conducted on
template-based classes, which makes it possible to isolate sources of semantic
variation and study the local geometry of embedding clouds under reproducible
conditions. This control is a methodological advantage for testing the geometric
hypothesis, but it also calls for further evaluation on less structured datasets
that are closer to natural language. In addition, fitted surfaces and
shape-based characteristics should be understood as properties of the data in
a particular representation space; broader cross-model stability of these results remains an open direction for
future research.

A separate outcome of this work is the construction of the \textbf{CoPaGE-300K}
dataset, a controlled
template-based dataset of semantically close sentences with slot-level annotation
and precomputed embeddings. It should not be treated as a natural paraphrase
corpus; its value lies in its controlled structure, full reproducibility, and
suitability for studying the local geometry of semantic variation in sentence
embedding space.

Thus, the main contribution of the work is to show that controlled semantic
variation induces measurable local geometry in embedding space. This geometry can
be explicitly approximated, analyzed, and used for geometry-aware latent
generation. The practical value of the approach lies in controlled modeling of
local semantic neighborhoods and diagnostics of representation space. Promising
directions for future work include adaptive selection of the fitted model,
cross-model stability analysis, application to less controlled corpora,
investigating ways to combine fitted-geometry fidelity with discriminative
filtering, and using local fitted surfaces for OOD detection, adversarial
filtering, uncertainty estimation, hallucination analysis, and the construction
of hard positive examples for contrastive learning.

\appendix
\section{Lexical variants used to construct controlled classes}
\label{app:lexical_variants}

This appendix provides the complete lists of lexical variants used to construct
controlled sets of semantically close sentences. Including these lists is
essential for the reproducibility of the experimental design, since they define
the structure of the local embedding clouds analyzed in the main text.

\subsection{Methodological status of the three template families}

The three template families used in this work have different linguistic status.
Template family A is closest to the classical synonym substitution setting: the
first slot varies reporting verbs, while the remaining slots vary nouns that
define semantically close alternatives within a single political-administrative
template. Template families B and C use a broader class of
\emph{context-compatible semantically close lexical items}. This design makes it
possible to test whether the main geometric regularities are preserved not only
for near-synonym variation, but also for a broader class of controlled lexical
perturbations.

Thus, the results for template family A can be interpreted as the closest to the
synonym substitution regime, whereas template families B and C serve as a
\emph{stress test} for a broader class of semantically close lexical variants.
For this reason, the claims in the main text are formulated for
\emph{controlled template-based local classes of semantically close sentences},
rather than for all synonymic paraphrases in general.

\subsection{Anchor values and subclass construction regimes}

For each template family, the appendix also specifies \emph{anchor values} for
non-varying slots. Anchor lexical items were chosen as neutral, frequent, and
contextually natural representatives of the corresponding lists. Their role is
technical: they fix inactive slots in regimes \(C1\)--\(C4\), so that each
subclass is uniquely defined and reproducible. This work does not include a
separate ablation study of the choice of anchor values; the effect of anchors on
the geometry of the cloud should therefore be treated as a separate question for
future research.

\subsection{Computational scale of the full space}

For each template family, the full four-slot regime \(C5\) contains
$$
18^4 = 104{,}976
$$
unique sentences. Therefore, across the three template families, the full
combined space contains
$$
3\cdot 18^4 = 314{,}928
$$
sentences. This means that encoding all \(C5\) sets in sentence embedding space
is computationally nontrivial, although technically feasible with batch
processing. For this reason, the experimental design must clearly distinguish
between the full combinatorial space and the specific regimes used in individual
experiments.

\subsection{Template family A: institutional communication and policy}
\label{app:template_A}

Base template:
$$
\texttt{The minister \{\} that the \{\} would bring \{\} for \{\}.}
$$

Anchor values for non-varying slots:
$$
(s_1,s_2,s_3,s_4)=(\texttt{said},\texttt{policy},\texttt{benefits},\texttt{citizens}).
$$

\paragraph{Slot \(s_1\): reporting verbs.}
\begin{quote}
said, stated, announced, declared, reported, mentioned, noted, remarked, observed, explained, confirmed, indicated, emphasized, stressed, asserted, claimed, communicated, revealed
\end{quote}

\paragraph{Slot \(s_2\): policy nouns.}
\begin{quote}
reform, policy, measure, initiative, program, plan, proposal, strategy, scheme, package, project, framework, regulation, decision, action, change, legislation, agreement
\end{quote}

\paragraph{Slot \(s_3\): benefit nouns.}
\begin{quote}
benefits, advantages, gains, improvements, support, relief, opportunities, resources, protections, assistance, enhancements, incentives, savings, efficiency gains, better outcomes, new opportunities, additional support, long-term benefits
\end{quote}

\paragraph{Slot \(s_4\): beneficiary nouns.}
\begin{quote}
citizens, residents, families, households, communities, students, workers, teachers, patients, parents, children, consumers, commuters, farmers, taxpayers, businesses, young people, older adults
\end{quote}

\subsection{Template family B: education}

Base template:
$$
\texttt{The teacher designed a \{\} lesson with \{\} exercises for \{\} students in a \{\} course.}
$$

Template family B does not implement a strict synonym substitution regime.
Instead, it uses a broader class of context-compatible semantically close
attributive modifiers. The lists below were deliberately constructed to avoid
literal overlap between slots \(s_1\) and \(s_2\), in order to avoid sentences
such as \texttt{a guided lesson with guided exercises}.

Anchor values for non-varying slots:
$$
(s_1,s_2,s_3,s_4)=(\texttt{practical},\texttt{guided},\texttt{beginner},\texttt{introductory}).
$$

\paragraph{Slot \(s_1\): lesson descriptors.}
\begin{quote}
practical, engaging, interactive, applied, focused, dynamic, coherent, organized, stimulating, accessible, well-paced, informative, balanced, classroom-based, skill-oriented, thoughtful, structured, motivating
\end{quote}

\paragraph{Slot \(s_2\): exercise descriptors.}
\begin{quote}
guided, scaffolded, targeted, incremental, hands-on, practice-based, reinforcing, diagnostic, collaborative, independent, reflective, contextual, problem-solving, stepwise, graded, varied, follow-up, manageable
\end{quote}

\paragraph{Slot \(s_3\): student descriptors.}
\begin{quote}
beginner, novice, entry-level, less-experienced, newly enrolled, first-year, junior, early-stage, foundation-level, developing, emerging, inexperienced, starting, lower-level, pre-intermediate, first-term, introductory-level, initial-stage
\end{quote}

\paragraph{Slot \(s_4\): course descriptors.}
\begin{quote}
introductory, foundational, elementary, basic, preparatory, survey, core, entry-course, lower-division, initial, starting-level, general, primer, baseline, bridge, orientation, gateway, first-cycle
\end{quote}

\subsection{Template family C: medicine}
\label{app:template_C}

Base template:

\texttt{The doctor recommended a \{\} treatment with \{\} monitoring for \{\} patients} \\
\texttt{during \{\} recovery.}

Template family C, like family B, goes beyond strict synonym substitution and
uses a broader class of context-compatible semantically close lexical items. In
constructing the lists, the literal repetition of \texttt{follow-up} in two
different slots was deliberately removed, in order to avoid tautological
combinations such as \texttt{follow-up treatment with follow-up monitoring}.

Anchor values for non-varying slots:
$$
(s_1,s_2,s_3,s_4)=(\texttt{conservative},\texttt{regular},\texttt{adult},\texttt{early}).
$$

\paragraph{Slot \(s_1\): treatment descriptors.}
\begin{quote}
conservative, targeted, supportive, individualized, stepwise, noninvasive, standard, evidence-based, supervised, outpatient, low-intensity, short-term, structured, moderate, symptom-focused, regimen-based, gradual, protocol-driven
\end{quote}

\paragraph{Slot \(s_2\): monitoring descriptors.}
\begin{quote}
regular, continuous, close, periodic, ongoing, systematic, daily, weekly, scheduled, attentive, remote, clinical, proactive, routine, longitudinal, bedside, sensor-based, post-discharge
\end{quote}

\paragraph{Slot \(s_3\): patient descriptors.}
\begin{quote}
adult, elderly, vulnerable, high-risk, postoperative, chronic, ambulatory, frail, stable, symptomatic, referred, monitored, at-risk, immunocompromised, long-term-care, recovering, geriatric, working-age
\end{quote}

\paragraph{Slot \(s_4\): recovery descriptors.}
\begin{quote}
early, initial, gradual, assisted, home-based, extended, planned, safe, partial, staged, steady, prolonged, post-acute, supported, late-phase, convalescent, rehabilitative, stabilizing
\end{quote}

\subsection{Correspondence between template families and experimental regimes}
\label{app:class_mapping}

For each of the three template families, the same sequence of regimes
\(C1\)--\(C5\) is constructed:
\begin{gather*}
\begin{aligned}
C1 &: \text{only slot } s_1 \text{ varies}, \quad |C1| = 18,\\
C2 &: \text{only slot } s_2 \text{ varies}, \quad |C2| = 18,\\
C3 &: \text{slots } s_1 \text{ and } s_2 \text{ vary jointly}, \quad |C3| = 18^2 = 324,\\
C4 &: \text{slots } s_1,s_2,s_3 \text{ vary jointly}, \quad |C4| = 18^3 = 5{,}832,\\
C5 &: \text{slots } s_1,s_2,s_3,s_4 \text{ vary jointly}, \quad |C5| = 18^4 = 104{,}976.
\end{aligned}
\end{gather*}

To avoid ambiguity, the notation \(A\!\!-\!C1,\dots,A\!\!-\!C5\),
\(B\!\!-\!C1,\dots,B\!\!-\!C5\), and \(C\!\!-\!C1,\dots,C\!\!-\!C5\) is used
when necessary. Thus, \(C1\) and \(C2\) are not independent ``small classes'',
but two single-slot control regimes within each template family.

\subsection{Reproducibility note}

All lists of lexical variants given in this appendix must match the lists
actually used in the code for generating controlled sentences. This is important
because even a small change in the lexical composition may affect the geometry of
the corresponding embedding cloud, fitted local models, shape-based
characteristics, coefficient consistency, and downstream results.

\section{Auxiliary cross-model robustness check with \texttt{all-MiniLM-L6-v2}}
\label{app:minilm_robustness}

This appendix reports an auxiliary cross-model robustness check performed with
\texttt{sentence-transformers/all-MiniLM-L6-v2}. The main experiments in the
paper use \texttt{sentence-transformers/all-mpnet-base-v2}, which produces
\(768\)-dimensional sentence embeddings. The purpose of the additional check is
to test whether the main qualitative geometric observation persists when a
different sentence-transformer model with a smaller embedding dimension is used.

The \texttt{all-MiniLM-L6-v2} model produces \(384\)-dimensional sentence
embeddings. We do not treat this auxiliary experiment as a full repetition of all
main experiments. Instead, it is used as a compact cross-model check of the
central geometric claim: local embedding clouds induced by controlled semantic
variation are better described by nonlinear fitted carriers than by affine
models.

\subsection{Adaptive PCA dimensionality for \texttt{all-MiniLM-L6-v2}}

As in the main experiments, PCA was applied separately to each controlled local
class. The reduced dimension \(r\) was selected by the \(90\%\)
explained-variance criterion. Table~\ref{tab:minilm_pca_dimensions} reports the
selected dimensions for all template families and regimes.

\begin{table}[ht]
\centering
\caption{Adaptive PCA dimensionality for \texttt{all-MiniLM-L6-v2} selected by the \(90\%\) explained-variance criterion.}
\label{tab:minilm_pca_dimensions}
\begin{tabular}{lrrrr}
\hline
\textbf{Regime} & \textbf{\(N\)} & \textbf{Original dim.} & \textbf{\(r\)} & \textbf{Explained variance} \\
\hline
A-C1 & 18      & 384 & 10 & 0.907445 \\
A-C2 & 18      & 384 & 13 & 0.914422 \\
A-C3 & 324     & 384 & 19 & 0.912216 \\
A-C4 & 5,832   & 384 & 28 & 0.906483 \\
A-C5 & 104,976 & 384 & 38 & 0.901610 \\
\hline
B-C1 & 18      & 384 & 14 & 0.928656 \\
B-C2 & 18      & 384 & 13 & 0.921866 \\
B-C3 & 324     & 384 & 24 & 0.902640 \\
B-C4 & 5,832   & 384 & 32 & 0.907814 \\
B-C5 & 104,976 & 384 & 39 & 0.903656 \\
\hline
C-C1 & 18      & 384 & 13 & 0.907216 \\
C-C2 & 18      & 384 & 10 & 0.908872 \\
C-C3 & 324     & 384 & 22 & 0.904844 \\
C-C4 & 5,832   & 384 & 30 & 0.901122 \\
C-C5 & 104,976 & 384 & 38 & 0.902562 \\
\hline
\end{tabular}
\end{table}

The selected dimensions range from \(r=10\) to \(r=39\). Although the original
embedding dimension of \texttt{all-MiniLM-L6-v2} is only half that of
\texttt{all-mpnet-base-v2}, the effective local dimensions of the full
four-slot regimes remain comparable. In particular, the \(C5\) regimes require
\(38\), \(39\), and \(38\) principal components for template families \(A\),
\(B\), and \(C\), respectively. This indicates that the controlled semantic
variation is still distributed across several dozen effective local directions
in the MiniLM embedding space.

\subsection{Affine and quadratic fitting in a fixed reduced space}

For computational reasons, the auxiliary MiniLM fitting check was performed in a
fixed reduced space of dimension \(r=20\). This setting is intentionally lighter
than the full \(90\%\)-variance reduced spaces reported in
Table~\ref{tab:minilm_pca_dimensions}. The \(r=20\) truncation preserves
\(0.6978\), \(0.6791\), and \(0.7005\) of the total PCA variance for
\(A\text{-}C5\), \(B\text{-}C5\), and \(C\text{-}C5\), respectively, whereas the
\(90\%\)-variance criterion selects \(r=38\), \(r=39\), and \(r=38\). 

Therefore, this check is deliberately more compressed than the main
\(90\%\)-variance setting, and the results below should not be interpreted as a
full repetition of the main fitting experiment. Their role is to test whether
the affine-versus-quadratic contrast persists under a different embedding model
and a smaller reduced coordinate space.

For each of the three full four-slot regimes \(A\text{-}C5\), \(B\text{-}C5\),
and \(C\text{-}C5\), we fitted an affine model and a quadric implicit model in
the \(r=20\) reduced space. The fitted residuals were evaluated using the
normalized implicit surface residual
$$
\rho(z)
=
\frac{|f(z)|}{\|\nabla f(z)\|+\varepsilon}.
$$
This quantity is a gradient-normalized algebraic residual and should not be
interpreted as an exact orthogonal distance to the fitted surface.

\begin{table}[ht]
\centering
\caption{Auxiliary affine-versus-quadric fitting check for \texttt{all-MiniLM-L6-v2} in a fixed reduced space \(r=20\). Surface RMSE and MAE are computed from the normalized implicit surface residual.}
\label{tab:minilm_affine_quadric_r20}
\begin{tabular}{llrrrrr}
\hline
\textbf{Regime} & \textbf{Model} & \textbf{Degree} & \textbf{Parameters} &
\textbf{Alg. RMSE} & \textbf{Surface RMSE} & \textbf{Surface MAE} \\
\hline
A-C5 & Affine  & 1 & 21  & 0.650885 & 0.732287 & 0.603730 \\
A-C5 & Quadric & 2 & 231 & 0.090136 & 0.114082 & 0.086731 \\
\hline
B-C5 & Affine  & 1 & 21  & 0.456916 & 0.482266 & 0.402423 \\
B-C5 & Quadric & 2 & 231 & 0.066729 & 0.082574 & 0.062777 \\
\hline
C-C5 & Affine  & 1 & 21  & 0.512891 & 0.550278 & 0.449854 \\
C-C5 & Quadric & 2 & 231 & 0.063231 & 0.074483 & 0.057713 \\
\hline
\end{tabular}
\end{table}

The results in Table~\ref{tab:minilm_affine_quadric_r20} show the same
qualitative pattern as the main experiments. In all three \(C5\) regimes, the
quadric model substantially reduces both algebraic RMSE and normalized implicit
surface RMSE relative to the affine model. For example, the surface RMSE
decreases from \(0.732287\) to \(0.114082\) for \(A\text{-}C5\), from
\(0.482266\) to \(0.082574\) for \(B\text{-}C5\), and from \(0.550278\) to
\(0.074483\) for \(C\text{-}C5\).

Thus, even in a deliberately reduced \(r=20\) coordinate space and under a
different sentence-transformer embedding model, the affine approximation remains
insufficient for the full controlled semantic clouds, whereas the quadric fitted
carrier captures the local structure much more accurately. This auxiliary check
supports the qualitative robustness of the main geometric conclusion: the
nonlinear local structure observed in the primary experiments is not merely an
artifact of the original \(768\)-dimensional MPNet-based embedding space.

At the same time, this check has a limited scope. It does not replace a full
cross-model validation over many embedding architectures, and it does not repeat
all geometric-validity and downstream experiments. Broader cross-model analysis,
including other sentence embedding families and natural paraphrase corpora,
remains a direction for future work.

\section{Proof of the barycentric angular stability theorem}
\label{app:barycentric_stability}

This appendix proves the angular stability statement used in
Section~\ref{sec:methodology}. Let
$$
Z=\{z_1,\dots,z_N\}\subset\mathbb{R}^r
$$
be a finite cloud of points, and let \(a\in\mathbb{R}^r\) be a reference point.
We define
\begin{gather*}
D_Z=\operatorname{diam}(Z)
=
\max_{i,j}\|z_i-z_j\|,
\\
d_Z(a)=\operatorname{dist}(a,\operatorname{conv}(Z)).
\end{gather*}
Here \(D_Z\) is the Euclidean diameter of the cloud, and \(d_Z(a)\) is the
distance from the reference point \(a\) to the convex hull of the cloud.

\begin{theorem}
Let \(Z=\{z_1,\dots,z_N\}\subset\mathbb{R}^r\) be a finite cloud and let
\(a\in\mathbb{R}^r\) be a reference point such that
$$
d_Z(a)=\operatorname{dist}(a,\operatorname{conv}(Z))>0.
$$
Let \(v\in\operatorname{conv}(Z)\) be a barycentric point such that \(v\ne a\).
If
$$
D_Z\leq 2d_Z(a),
$$
then, for every \(j=1,\dots,N\),
$$
\angle(v-a,z_j-a)
\leq
2\arcsin\frac{D_Z}{2d_Z(a)}.
$$
\end{theorem}

\begin{proof}
Since \(v\in\operatorname{conv}(Z)\), the point \(v\) lies in the convex hull of
the original cloud. The diameter of the convex hull of a finite set is equal to
the diameter of the set itself. Hence, for every \(j=1,\dots,N\),
$$
\|v-z_j\|\leq D_Z.
$$
Indeed, each point of \(\operatorname{conv}(Z)\) is a convex combination of the
points \(z_1,\dots,z_N\), and the convex hull cannot have a larger Euclidean
diameter than the maximum distance between its generating points.

Next, from the definition of
$$
d_Z(a)=\operatorname{dist}(a,\operatorname{conv}(Z)),
$$
every point of \(\operatorname{conv}(Z)\) is at distance at least \(d_Z(a)\) from
the reference point \(a\). Since
$$
v\in\operatorname{conv}(Z),
\qquad
z_j\in Z\subset\operatorname{conv}(Z),
$$
we have
\begin{gather}
\|v-a\|\geq d_Z(a),
\\
\|z_j-a\|\geq d_Z(a).
\end{gather}

Let
$$
\theta_j=\angle(v-a,z_j-a).
$$
For any two nonzero vectors \(u,w\in\mathbb{R}^r\) with angle
\(\theta=\angle(u,w)\), the following elementary Euclidean estimate holds:
$$
\|u-w\|
\geq
2\min(\|u\|,\|w\|)\sin\frac{\theta}{2}.
$$
Applying this estimate to
$$
u=v-a,
\qquad
w=z_j-a,
$$
and using
$$
\min(\|v-a\|,\|z_j-a\|)\geq d_Z(a),
$$
we obtain
$$
\|(v-a)-(z_j-a)\|
\geq
2d_Z(a)\sin\frac{\theta_j}{2}.
$$
Since
$$
\|(v-a)-(z_j-a)\|=\|v-z_j\|,
$$
and \(\|v-z_j\|\leq D_Z\), it follows that
$$
2d_Z(a)\sin\frac{\theta_j}{2}
\leq
D_Z.
$$
Therefore,
$$
\sin\frac{\theta_j}{2}
\leq
\frac{D_Z}{2d_Z(a)}.
$$
Since \(D_Z\leq 2d_Z(a)\), the right-hand side is at most \(1\), and hence
$$
\frac{\theta_j}{2}
\leq
\arcsin\frac{D_Z}{2d_Z(a)}.
$$
Thus,
$$
\theta_j
\leq
2\arcsin\frac{D_Z}{2d_Z(a)}.
$$
This proves the claim.
\end{proof}

\begin{corollary}
Under the assumptions of the theorem, for every barycentric point
\(v\in\operatorname{conv}(Z)\) such that \(v\ne a\),
$$
\alpha_{Z\cup\{v\}}(a)
\leq
\max\left\{
\alpha_Z(a),\;
2\arcsin\frac{D_Z}{2d_Z(a)}
\right\},
$$
where
$$
\alpha_Z(a)
=
\max_{i,j}\angle(z_i-a,z_j-a)
$$
is the angular diameter of the original cloud with respect to the reference
point \(a\).
\end{corollary}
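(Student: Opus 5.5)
The corollary follows from the theorem once the pairs of points in \(Z\cup\{v\}\) are split by type. I will write \(\beta=2\arcsin\frac{D_Z}{2d_Z(a)}\).

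First I check that every angle involved is defined. Since \(d_Z(a)>0\), the point \(a\) does not lie in \(\operatorname{conv}(Z)\). Hence \(z_i-a\neq 0\) for all \(i\). Also \(v-a\neq 0\), which is assumed explicitly (and in fact follows anyway, because \(v\in\operatorname{conv}(Z)\)). So \(\alpha_{Z\cup\{v\}}(a)\) is the maximum of \(\angle(p-a,q-a)\) over pairs \(p,q\in Z\cup\{v\}\) of nonzero vectors, and it suffices to bound each such angle by \(\max\{\alpha_Z(a),\beta\}\).

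There are three kinds of pairs.
\begin{enumerate}
\item If \(p,q\in Z\), then \(\angle(p-a,q-a)\le\alpha_Z(a)\) by the definition of \(\alpha_Z(a)\).
\item If \(p=q=v\), the angle is \(0\), which is at most either quantity.
\item If one of the points is \(v\) and the other is some \(z_j\), then symmetry of the angle gives \(\angle(v-a,z_j-a)\le\beta\). This is exactly the theorem's conclusion, and its hypotheses \(d_Z(a)>0\), \(D_Z\le 2d_Z(a)\) and \(v\ne a\) are the standing assumptions.
\end{enumerate}
Taking the maximum over all pairs yields the claimed inequality.

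The only point needing any care is the third case, and even that reduces to citing the theorem. So I expect no real obstacle. I would simply remark that the finite maximum defining \(\alpha\) is attained, so the inequality holds for the maximum and not just for each pair separately.
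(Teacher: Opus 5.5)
Your proof is correct and is the paper's argument: angles between pairs of points of $Z$ are bounded by $\alpha_Z(a)$, angles involving $v$ are bounded by the theorem, and so the maximum over $Z\cup\{v\}$ is at most the larger of the two bounds. Your extra checks are details the paper leaves implicit: that every $z_i-a\neq 0$ because $a\notin\operatorname{conv}(Z)$, and that the pair $(v,v)$ contributes angle $0$.
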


\begin{proof}
The pairwise angles between the original points of \(Z\), measured with respect
to the reference point \(a\), are bounded by \(\alpha_Z(a)\). By the theorem,
every angle between \(v-a\) and \(z_j-a\) is bounded by
$$
2\arcsin\frac{D_Z}{2d_Z(a)}.
$$
Therefore, the largest angle in the enlarged cloud \(Z\cup\{v\}\), measured with
respect to \(a\), is bounded by the maximum of these two quantities.
\end{proof}

\begin{corollary}
If, in addition,
$$
2\arcsin\frac{D_Z}{2d_Z(a)}
\leq
\alpha_Z(a),
$$
then every barycentric point \(v\in\operatorname{conv}(Z)\), \(v\ne a\), does
not increase the angular diameter of the cloud with respect to \(a\):
$$
\alpha_{Z\cup\{v\}}(a)\leq \alpha_Z(a).
$$
\end{corollary}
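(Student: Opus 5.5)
The corollary follows directly from the first corollary, which I will take as already established: under the hypotheses of the theorem, for every barycentric point \(v\in\operatorname{conv}(Z)\) with \(v\ne a\),
\[
\alpha_{Z\cup\{v\}}(a)\leq \max\Bigl\{\alpha_Z(a),\;2\arcsin\frac{D_Z}{2d_Z(a)}\Bigr\}.
\]
The added hypothesis \(2\arcsin\frac{D_Z}{2d_Z(a)}\le\alpha_Z(a)\) says the second term in the maximum is at most the first, so the maximum equals \(\alpha_Z(a)\). This yields \(\alpha_{Z\cup\{v\}}(a)\le \alpha_Z(a)\).

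For completeness I will also note the reverse inequality. Since \(Z\subset Z\cup\{v\}\), the maximum defining \(\alpha_{Z\cup\{v\}}(a)\) ranges over a superset of the pairs used for \(\alpha_Z(a)\). Hence \(\alpha_{Z\cup\{v\}}(a)\ge\alpha_Z(a)\), so in fact equality holds, although the statement only requires \(\le\).

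One point to make explicit is that all angles involved are well defined. The assumption \(d_Z(a)>0\) gives \(a\notin\operatorname{conv}(Z)\), so every \(z_i-a\) is nonzero. The vector \(v-a\) is nonzero by assumption.

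There is no real obstacle here. The only care needed is to unpack \(\alpha_{Z\cup\{v\}}(a)\) correctly: it is a maximum over three kinds of pairs. Pairs of original points are bounded by \(\alpha_Z(a)\). Mixed pairs \((v,z_j)\) are bounded by the theorem. The degenerate pair \((v,v)\) has angle \(0\). I will state this enumeration briefly and then apply the max-comparison above.
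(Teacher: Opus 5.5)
Your proof is correct and is exactly the paper's argument: the paper's proof says only that the claim follows immediately from the previous corollary, namely that the bound $\max\{\alpha_Z(a),\,2\arcsin\frac{D_Z}{2d_Z(a)}\}$ collapses to $\alpha_Z(a)$ under the added hypothesis. Your extra remarks are correct but not needed for the statement: the angles are well defined because $a\notin\operatorname{conv}(Z)$, and $Z\subset Z\cup\{v\}$ gives the reverse inequality, hence equality.
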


\begin{proof}
This follows immediately from the previous corollary.
\end{proof}

The theorem and its corollaries show that convex barycentric initialization is
automatically internal in the affine sense, while angular stability requires an
additional separation condition relative to a chosen reference point. If the
convex hull of the local cloud is well separated from \(a\) relative to its
diameter, then barycentric points remain within a controlled angular region
generated by the original cloud.

In the standard centered PCA coordinates used in the implementation, the origin
coincides with the empirical mean of the reduced cloud. Since this mean is an
equal-weight convex combination of the points, the origin belongs to
\(\operatorname{conv}(Z)\). Therefore, the separation condition generally does
not hold for \(a=0\) in centered PCA coordinates. The theorem should therefore
be interpreted as a sufficient angular-stability statement relative to an
external reference point, not as an automatic guarantee for centered PCA
representations. In the implemented algorithm, barycentric initialization
guarantees convex-hull internality, while angular, neighborhood, and
distributional admissibility are assessed empirically by the geometric validity
criteria.
\end{document}